\newif\ifJournal
\providecommand{\keywords}[1]{\small \textbf{\textit{Keywords---}} #1}
\newcommand{\N}{\mathbb{N}}
\setlist[enumerate]{noitemsep, topsep=0pt}
\setlist[itemize]{noitemsep, topsep=0pt}
\title{Online Learning to Transport via the Minimal Selection Principle}
\begin{document}

\author[1]{Wenxuan Guo}
\author[1]{YoonHaeng Hur}
\author[1]{Tengyuan Liang}
\author[2]{Christopher Thomas Ryan}

\affil[1]{University of Chicago}
\affil[2]{University of British Columbia}

%\date{}

\maketitle

\begin{abstract}
Motivated by robust dynamic resource allocation in operations research, we study the \textit{Online Learning to Transport} (OLT) problem where the decision variable is a probability measure, an infinite-dimensional object. We draw connections between online learning, optimal transport, and partial differential equations through an insight called the minimal selection principle, originally studied in the Wasserstein gradient flow setting by \citet{Ambrosio_2005}. This allows us to extend the standard online learning framework to the infinite-dimensional setting seamlessly. Based on our framework, we derive a novel method called the \textit{minimal selection or exploration (MSoE) algorithm} to solve OLT problems using mean-field approximation and discretization techniques. In the displacement convex setting, the main theoretical message underpinning our approach is that minimizing transport cost over time (via the minimal selection principle) ensures optimal cumulative regret upper bounds. On the algorithmic side, our MSoE algorithm applies beyond the displacement convex setting, making the mathematical theory of optimal transport practically relevant to non-convex settings common in dynamic resource allocation.
\end{abstract}

\keywords{Infinite-dimensional optimization, online learning, optimal transport.}

\tableofcontents
\fi
%%%%%%%%%%%%%%%%%%%%%%%%%%%%%%%%%%%%%%%%%%%%%%%%%%%%%%%%%%%%%%%%%%%%%%%%%%%%%%%%%%%%%%%%%%%%
%%%%%%%%%%%%%%%%%%%%%%%%%%%%%%%%%%%%%%%%%%%%%%%%%%%%%%%%%%%%%%%%%%%%%%%%%%%%%%%%%%%%%%%%%%%%

%%%%%%%%%%%%%%%%%%%%%%%%%%%%%%%%%%%%%%%%%%%%%%
%% Main Paper                               %%
%%%%%%%%%%%%%%%%%%%%%%%%%%%%%%%%%%%%%%%%%%%%%%

\section{Introduction}
\label{sec:intro}

Online learning and online convex optimization offer an elegant framework for regret minimization under worst-case sequences \citep{gordon1999regret,zinkevich2003online,shalev-shwartz2012OnlineLearning, orabona2021ModernIntroduction}. Principles for these methods have been discovered independently in decision theory, game theory, learning theory, and convex optimization \citep{cesa2006prediction}. For problems with a finite-dimensional decision variable, extensive studies have been conducted to understand online algorithms that achieve small regret, either in Euclidean or non-Euclidean settings. Such algorithms usually rely on some notion of (sub-)gradients to iteratively improve the finite-dimensional decision variable. In the simplest Euclidean setting, let $\ell_t \colon \R^d \rightarrow \R$ be a smooth convex function and $x_t \in \R^d$ be a finite-dimensional decision variable, both indexed by time $t \in \mathbb{N}$. Online gradient descent with stepsize $\eta$ satisfies, for any $z \in \R^d$,
\begin{align}
	\label{eqn:euclidean}
	2\eta\big( \overbrace{\ell_t(x_t) - \ell_t(z)}^{\text{regret term}} \big) - \big( \overbrace{\|x_{t} - z \|^2 - \| x_{t+1} - z \|^2}^{\text{telescoping term}} \big) \leq  \overbrace{ \| x_{t} - x_{t+1} \|^2}^{\text{transport cost}} \leq  \overbrace{ \| \eta \nabla \ell_t (x_t) \|^2}^{\text{gradient in Euclidean norm}} .
\end{align}
In a simple non-Euclidean setting, let $p_t \in \Delta_d$ be a probability distribution on a $d$-discrete decision and $\ell_t \in \R^d$ be a sequence of cost vectors where each element $\ell_t[k]$ denotes the cost associated with the $k$-th action and $\ell_t(p_t) := \sum_{k = 1}^{d} \ell_t[k] p_t[k]$ thus $\nabla \ell_t (\cdot) \equiv \ell_t $. Online mirror descent (specifically, exponentiated gradient) satisfies for any $q \in \Delta_d$
\begin{align}
	\label{eqn:entropy}
	\eta \big( \overbrace{\ell_t(p_t) - \ell_t(q)}^{\text{regret term}} \big) - \big( \overbrace{ d_{\mathrm{KL}}(q || p_{t}) - d_{\mathrm{KL}}(q || p_{t+1})}^{\text{telescoping term}} \big) \leq   \overbrace{ d_{\mathrm{KL}} (p_t || p_{t+1} ) }^{\text{transport cost}} \leq  \overbrace{ \| \eta \nabla \ell_t \|_{p_t}^2 }^{\text{gradient in local norm}} ,
\end{align}
where the local norm is defined as $\|\ell_t\|_{p_t}^2 := \sum_{k = 1}^{d} (\ell_t[k])^2 p_t[k]$. It is immediate to spot the resemblance between \eqref{eqn:euclidean} and \eqref{eqn:entropy}. The telescoping term marks the progress towards the competing decision ($z$ or $q$), and the first right-hand side of the expression describes some notion of ``transport cost'' induced by different geometries.

It is natural to wonder how the theoretical and algorithmic principles behind these finite-dimensional cases extend to the infinite-dimensional case, specifically when the decision variable is a probability distribution over a generic metric space. Such a question is of practical importance. Several dynamic resource allocation problems in operations research involve these infinite-dimensional objects, e.g., routing a network of drones over airspace or allocating drivers across a city topology in ridesharing. Theoretically, the infinite-dimensional object can be viewed as a generalization of both \eqref{eqn:euclidean} ($x$ from finite to infinite-dimensional) and \eqref{eqn:entropy} ($p$ from discrete to continuous measure). As hinted in the previous paragraph where small ``transport cost'' terms guarantee small regret, optimal transport theory \citep{Villani_2003, Ambrosio_2005} plays a pivotal role in extending online learning to the infinite-dimensional case. This paper draws connections between online learning, optimal transport, and partial differential equations (PDEs) using an insight called the minimal selection principle. Eventually, we derive new algorithms based on this insight.

%% Optimal Transport and PDEs
Optimal transport (OT) studies how to move in the space of probability measures to minimize a cost metric. This cost metric, credited to Monge in 1781, is associated with the optimal way to move mass between two probability measures. This paper investigates online learning using a toolbox set out by Breiner, who brought perspectives from PDEs, geometry, and functional analysis to the study of OT around 1987 \citep{Brenier_1987, Brenier_1989}. Let us first introduce our infinite-dimensional online optimization problem in the language of OT and then lay out the connection between OT and online learning. Consider the following \textbf{Online Learning to Transport (OLT)} problem: let $\cP(\R^d)$ be the space of probability measures on $\R^d$ where, in each round, $t = 1, \ldots, T$, 
\begin{itemize}
	\item an adversary chooses an energy functional $\cE_t \colon \cP(\R^d) \rightarrow \R$ without revealing it;
	\item the player commits a probability measure $\mu_t \in \cP(\R^d)$ as the decision variable;
	\item the adversary reveals the energy functional, and the player suffers the loss $\cE_t(\mu_t)$.
\end{itemize}
The player needs to decide the next $\mu_{t+1}$ based on $\mu_{t}$ and all historical information and aims to minimize cumulative regret from facing the adversary. The main conceptual message we discover in OLT is that optimally transporting the probability measures $\mu_{1} \rightarrow \mu_{2} \rightarrow \ldots \rightarrow \mu_{T}$ with respect to an appropriate cost enables small cumulative regret.

To provide a glimpse of the angle that inspired us to connect online learning and OT, we follow a viewpoint taken in \cite{benamou1999numerical} and \cite{otto2001geometry}. All concepts introduced in this paragraph have rigorous definitions in Section~\ref{sec:background}, so we proceed at a high level here. Mimicking the finite-dimensional case, OT theory provides a way to calculate a type of (sub-)gradient of $\cE_t$ over $\cP(\R^d)$ when equipped with a certain metric. This (sub-)gradient is used to update $\mu_{t} \rightarrow \mu_{t + 1}$. To enrich this analogy, we consider a continuous/infinitesimal time analog. In the finite-dimensional cases \eqref{eqn:euclidean} and \eqref{eqn:entropy}, the first right-hand side in a continuous time analog would quantify movement according to the ODE $\dv{x_t}{t} = - \nabla \ell_t (x_t)$ where
%\begin{align}
	$\big\| \dv{ x_t }{t} \big\|^2 = \| \nabla \ell_t (x_t) \|^2$.
%\end{align}
In the infinite-dimensional case, the density $\rho_t$ (of $\mu_t$ w.r.t. the Lebesgue measure) evolves according to the PDE $\pdv{\rho_t}{t} = \nabla \cdot (\rho_t \boldsymbol{\xi}_t)$ with a vector field $\boldsymbol{\xi}_t \colon \R^d \to \R^d$ from the ``Fr\'{e}chet subdifferential'' $\partial \cE_t(\mu_t)$. The infinitesimal transport cost is\footnote{The curious reader may identify the above resemblance to the local norm in \eqref{eqn:entropy}: here the local Riemannian geometry quantifies the transport cost.} 
\begin{align}
	\big\| \pdv{ \rho_t}{t} \big\|_{\rho_t}^2 = \inf_{\boldsymbol{\xi} \in L^2( \mu_t; \R^d)} \left\{ \int  \| \boldsymbol{\xi}(x) \|^2 \dd{\mu_t(x)} ~:~  \boldsymbol{\xi} \in \partial \cE_t(\mu_t)  \right\} \;. \label{eqn:variation}
\end{align}
This motivates the \emph{minimal selection principle} of \cite{Ambrosio_2005}: among all the possible vector fields that belong to the Fr\'{e}chet subdifferential, we aim to select one whose kinetic energy (captured by the integral $\int \| \boldsymbol{\xi}(x) \|^2 \dd{\mu_t(x)}$) is lowest. Assuming a notion of convexity of $\cE_t$ along Riemannian geodesics, we formally show that minimizing the transport cost over time using the minimal selection principle yields the smallest cumulative regret upper bounds in the OLT problem.

Equipped with the minimal selection principle, we propose a novel algorithm in Section~\ref{sec:zeroth} that we call \emph{minimal selection or exploration (MSoE)} to solve the variational optimization problem in \eqref{eqn:variation} and, in turn, solve OLT using only zeroth-order partial feedback similar to the bandit setting. To make this algorithm numerically tractable, we use a discrete analog of the minimal selection principle using mean-field approximations. It is noteworthy that our MSoE algorithm works beyond the convex setting, which shows how elegant theory from OT is practically relevant to certain non-convex settings common in robust dynamic resource allocation. Simple toy examples demonstrating the algorithm's empirical performance are discussed in Appendix \ref{sec:simulations}.

A key feature of this paper is the natural simplicity of its arguments and algorithmic principles that become apparent once the framework connecting online learning and OT is established. Several directions for extending our framework are discussed at the end of the paper.

\paragraph{Related work} Our paper is at the intersection of two active research fields: online learning and optimal transport. For the former, due to space limits, we cannot do fair justice to credit all contributions properly; see \cite{orabona2021ModernIntroduction} for a comprehensive survey. Here, we give a selective overview. Several improvements of online gradient descent \eqref{eqn:euclidean} using an adaptive stepsize have been proposed \citep{streeter2010less, mcmahan2010adaptive} with a further modification via scale-freeness \citep{orabona2015,orabona2018}. Expression \eqref{eqn:entropy} can be derived using different principles such as exponentiated gradient \citep{KIVINEN19971}, online mirror descent (OMD), follow-the-regularized-leader (FTRL) \citep{shalev-shwartz_primal-dual_2007, Abernethy08competingin}. Modifications of FTRL via adaptive regularization have been introduced \citep{vanErven_2011,rooij_2014, orabona2015}. General first-order Riemannian optimization methods have been investigated in \cite{pmlr-v49-zhang16b}. Using martingale tools, a theoretical framework for sequential prediction has been studied in \cite{rakhlinStatisticalLearning}. The typical regret bound for online learning with $K$-discrete actions scales as $\sqrt{T \log K}$ in the full information setting, and as $\sqrt{T K \log K}$ in the bandit/partial feedback setting. Therefore, naive generalizations of these bounds to the infinite-dimensional case ($K \to \infty$) result in diverging bounds. It is thus unclear whether $\sqrt{T}$-regret is achievable in the infinite-dimensional case. We employ tools from optimal transport to resolve this issue and obtain optimal bounds.

Beyond well-established analytic results of OT, computational \citep{cuturi_2013, genevay_etal_2016, Altschuler_weed_rigollet_2017} and statistical \citep{weed_2019, liang2018HowWell, weed2019estimation,  liang2019EstimatingCertain, hutter_rigollet_2021} aspects have been emerging research areas at the intersection of OT, probability distribution estimation, and numerical sampling. At the same time, OT has been a versatile tool for various applied tasks such as image retrieval \citep{rubner_earth_2000}, computational linguistics \citep{kusner}, and domain adaptation \citep{courty_2017}. One of the successful applications of OT is generative sampling, such as GANs \citep{goodfellow_2014}. OT has improved generative sampling in several ways: by modifying GANs using OT-based probability metrics \citep{arjovsky_2017, genevay_2018} or by utilizing dual formulations of OT \citep{seguy_damodaran_flamary_courty_rolet_blondel_2018, makkuva_taghvaei_oh_lee_2020}. More recently, further improvements \citep{bunne_alvarez-melis_krause_jegelka_2019,hur_guo_liang_2021} have been made by considering the Gromov-Wasserstein \citep{memoli_2011}, a generalization of OT ideas for identifying isomorphism in metric measure spaces.

\paragraph{Notation} Let $\cP(\R^d)$ denote the set of all Borel probability measures on $\R^d$. Let $\cP^r(\R^d)$ denote the subset of $\cP(\R^d)$ of Borel probability measures that are absolutely continuous with respect to the Lebesgue measure $\cL^d$ on $\R^d$. For $\mu \in \cP^r(\R^d)$, we write $\mu = \rho \cdot \cL^d$ to signify that $\rho$ is a density function of $\mu$ with respect to $\cL^d$. Let $\Pi(\mu, \nu)$ denote the collection of all couplings of $\mu, \nu \in \cP(\R^d)$, that is, $\gamma \in \Pi(\mu, \nu)$ is a Borel probability measure on $\R^d \times \R^d$ such that $\gamma(A \times \R^d) = \mu(A)$ and $\gamma(\R^d \times B) = \nu(B)$ for all Borel subsets $A, B \subset \R^d$. For a measurable map $\bt \colon \R^d \to \R^p$ and $\mu \in \cP(\R^d)$, we define the pushforward measure $(\bt_{\#} \mu)(B) = \mu\{x \in \R^d : \bt(x) \in B\}$ for any Borel subset $B \subset \R^p$; hence $\bt_{\#} \mu \in \cP(\R^p)$. We let $\delta_x$ denote the Dirac measure for $x \in \R^d$ and $\bi \colon \R^d \to \R^d$ the identity map $\bi(x) = x$ for all $x \in \R^d$. Lastly, $\|\cdot\|$ denotes the standard Euclidean norm on $\R^d$, $[x]_+ = \max(x, 0)$ for $x \in \R$, and let $[n]$ denote the unordered set $\{1, \ldots, n\}$ for $n \in \N$.

\section{Optimal Transport, Minimal Selection Principle, and Regret Bound}
\label{sec:background}
In this section, we present our main theoretical results for OLT. As noted earlier, the key to our analysis is a connection with optimal transport theory. We start by introducing the notion of Wasserstein space to structure the decision space of OLT. Recall from \eqref{eqn:euclidean} and \eqref{eqn:entropy} that a notion of difference (transport cost) between two consecutive decisions plays a role in deriving a regret bound. To apply this idea to OLT, we utilize the Wasserstein distance between probability measures (decisions in OLT). Next, we discuss differential calculus over Wasserstein space, which serves as a key building block for deriving a regret bound for OLT. In Wasserstein space, we first define a subdifferential and then select an element associated with the smallest size, measured by the local Riemannian geometry of the Wasserstein space. Such an element, which we call a minimal selection, is defined by a variational problem and serves as a functional gradient. Based on this, we make transparent a strategy to obtain $\sqrt{T}$-regret that generalizes seamlessly to the infinite-dimensional setting. Lastly, we mention a connection between OLT and Wasserstein gradient flow. 

\subsection{Wasserstein Space}
One of the most important consequences of optimal transport theory is that we can define a distance between $\mu, \nu \in \cP(\R^d)$ by finding a coupling that gives the smallest transport cost. Concretely, we define the Wasserstein distance $W_2$ on $\cP(\R^d)$ as $W_2^2(\mu, \nu) = \inf_{\gamma \in \Pi(\mu, \nu)} \int_{\R^d \times \R^d} \|x - y\|^2 \dd{\gamma(x, y)}$. The Wasserstein distance $W_2$ is indeed a distance over $\cP_2(\R^d)$, where $\cP_2(\R^d) = \{\mu \in \cP(\R^d) : \int_{\R^d} \|x\|^2 \dd{\mu(x)} < \infty\}$. From this, we obtain a metric space $(\cP_2(\R^d), W_2)$ of probability measures called the Wasserstein space.

One important property of $W_2$ is that we can find a unique optimal coupling under certain regularity conditions. Here, an optimal coupling is any coupling $\gamma \in \Pi(\mu, \nu)$ associated with the smallest transport cost, that is, $W_2^2(\mu, \nu) = \int_{\R^d \times \R^d} \|x - y\|^2 \dd{\gamma(x, y)}$. Proposition 2.1 of \cite{Villani_2003} tells that $\Pi_o(\mu, \nu)$, the set of all optimal couplings, is always nonempty. The following result states that $\Pi_o(\mu, \nu)$ is a singleton if $\mu$ is absolutely continuous with respect to the Lebesgue measure; moreover, such an optimal coupling is concentrated on the graph of some unique map. 

\begin{lemma}
	\label{def:brenier-partial}
	Let $\cP_2^r(\R^d) = \cP_2(\R^d) \cap \cP^r(\R^d)$. Given $\mu \in \cP_2^r(\R^d)$ and $\nu \in \cP_2(\R^d)$, there exists a unique optimal coupling $\gamma \in \Pi(\mu, \nu)$, namely $\Pi_o(\mu, \nu) = \{\gamma\}$. Also, there exists a unique map $\bt_{\mu}^{\nu} \colon \R^d \to \R^d$ such that $\gamma$ is concentrated on the graph of $\bt_{\mu}^{\nu}$, that is, 
	\begin{equation*}
		\gamma (\{(x, y) \in \R^d \times \R^d : y = \bt_{\mu}^{\nu}(x)\}) = 1 \; .
	\end{equation*}
	This implies $(\bt_{\mu}^{\nu})_\# \mu = \nu$ and $W_2^2(\mu, \nu) = \int_{\R^d} \|x - \bt_{\mu}^{\nu}(x)\|^2 \dd{\mu(x)}$.
\end{lemma}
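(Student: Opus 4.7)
The plan is to reduce this to Brenier's theorem via the classical decomposition of the quadratic cost. Since $\mu, \nu \in \cP_2(\R^d)$, the expansion $\|x-y\|^2 = \|x\|^2 - 2\langle x, y\rangle + \|y\|^2$ shows that minimizing $\int \|x-y\|^2 \dd{\gamma(x,y)}$ over $\gamma \in \Pi(\mu, \nu)$ is equivalent to maximizing $\int \langle x, y\rangle \dd{\gamma(x,y)}$, because the first and third terms integrate to fixed constants depending only on the marginals. This linearizes the problem and brings classical convex analysis into play.

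First, I would argue that any $\gamma \in \Pi_o(\mu, \nu)$ is concentrated on a cyclically monotone set $\Gamma \subset \R^d \times \R^d$: if not, a finite swapping cycle could be used to build a competitor coupling with strictly smaller transport cost, contradicting optimality. Second, by Rockafellar's theorem, every cyclically monotone set is contained in the graph of the subdifferential of some proper lower semicontinuous convex function $\varphi \colon \R^d \to \R \cup \{+\infty\}$, so that $\gamma$ is concentrated on $\{(x, y) : y \in \partial \varphi(x)\}$. Third, since convex functions on $\R^d$ are differentiable Lebesgue-a.e., and $\mu \ll \cL^d$ transfers Lebesgue-null sets to $\mu$-null sets, $\varphi$ is differentiable $\mu$-a.e. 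Setting $\bt_\mu^\nu := \nabla \varphi$ collapses the multi-valued map $x \mapsto \partial \varphi(x)$ to a single-valued one $\mu$-a.e., so every optimal $\gamma$ must satisfy $\gamma = (\bi, \bt_\mu^\nu)_{\#} \mu$. This simultaneously yields $\Pi_o(\mu, \nu) = \{\gamma\}$ and $(\bt_\mu^\nu)_{\#} \mu = \nu$ from the marginal constraint on $\gamma$. The Wasserstein distance formula $W_2^2(\mu, \nu) = \int \|x - \bt_\mu^\nu(x)\|^2 \dd{\mu(x)}$ then follows by applying change of variables to the pushforward representation of $\gamma$, and $\mu$-a.e. uniqueness of $\bt_\mu^\nu$ is automatic: any competing map would induce a distinct optimal coupling, contradicting the singleton property just proved.

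The hard part will be the cyclical monotonicity step and the invocation of Rockafellar's theorem in full rigor; the former requires a careful measurable-selection argument to produce the perturbed competitor when the support fails to be cyclically monotone, while the latter is a nontrivial structural result in convex analysis. Once these classical ingredients are in place, the absolute continuity assumption on $\mu$ enters only through the single clean fact that convex functions on $\R^d$ are Lebesgue-a.e. differentiable. Since the statement is essentially Brenier's theorem in the setting where the target $\nu$ need not be absolutely continuous, I would expect the paper to cite \cite{Villani_2003} or \cite{Ambrosio_2005} for the technical work rather than reproducing the full argument.
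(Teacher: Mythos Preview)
Your proof sketch is the standard route to Brenier's theorem and is correct, and your closing prediction is exactly what happens: the paper offers no proof at all, instead remarking that the lemma is part of Brenier's theorem \citep{Brenier_1991} and pointing to Theorem~2.12 of \cite{Villani_2003} for details. So there is nothing to compare---your outline is precisely the argument one finds in the cited references, and the paper simply imports the result.
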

\begin{remark}
	This is a part of Brenier's theorem \citep{Brenier_1991}, which contains further properties of $\bt_{\mu}^{\nu}$ such as monotonicity. See Theorem 2.12 of \cite{Villani_2003} for details.
\end{remark}

\subsection{Displacement Convexity and Subdifferential Calculus}
Having defined the decision space for OLT, we explore how to minimize a loss function over this space. As noted earlier, the key is finding an object analogous to the gradient in the finite-dimensional setting. \cite{Ambrosio_2005} achieves this by rigorously defining notions of convexity and differential calculus on Wasserstein space. We reproduce a few relevant results from \cite{Ambrosio_2005}. Throughout, we consider a functional $\cE \colon \cP_2(\R^d) \to (-\infty, \infty]$ with a nonempty domain $D(\cE) := \{\mu \in \cP_2^r(\R^d) : \cE(\mu) < \infty\} \neq \emptyset$; for a concise summary, we restrict the domain to $\cP_2^r(\R^d)$, see Section 10.1 of \cite{Ambrosio_2005} for more details. First, we discuss convexity of $\cE$. To import convexity into $\cP^r_2(\R^d)$, we need a concept that replaces the concept of `line segment' in a vector space. \cite{McCann_1997} introduces the displacement interpolation for connecting two elements of $\cP_2(\R^d)$ and defines the convexity based on it.
\begin{definition}[Displacement interpolation and Convexity]
	\label{def:displacement-convex}
	We define the displacement interpolation between $\mu, \nu \in \cP_2^r(\R^d)$ as a curve $(\pi_\eta^{\mu \to \nu})_{\eta \in [0, 1]}$ in $\cP_2^r(\R^d)$ such that
	\begin{equation*}
		\pi_\eta^{\mu \to \nu} := ((1- \eta) \bi + \eta \bt_{\mu}^{\nu})_{\#} \mu \; .
	\end{equation*}
	We say $\cE$ is displacement convex if $\cE(\pi_\eta^{\mu \to \nu}) \leq (1 - \eta) \cE(\mu) + \eta \cE(\nu)$ for all $\mu, \nu \in D(\cE)$ and $\eta \in [0, 1]$.
\end{definition}

\begin{remark}
	To see that the displacement interpolation $(\pi_\eta^{\mu \to \nu})_{\eta \in [0, 1]}$ serves as a segment connecting $\mu$ and $\nu$ one can verify that $\pi_0^{\mu \to \nu} = \mu$, $\pi_1^{\mu \to \nu} = \nu$, and $\pi_\eta^{\mu \to \nu} \in \cP_2^r(\R^d)$ for all $\eta \in (0, 1)$. See Chapter 7 of \cite{Ambrosio_2005} for details.
\end{remark}

Next, we introduce the Fr\'{e}chet subdifferential, which delineates the differentiable structure on the Wasserstein space. Recall that derivatives or subgradients of functionals defined on a Hilbert space are linear functionals over that space. To mimic their features, we define the subdifferential at $\mu \in \cP_2^r(\R^d)$ by means of a Hilbert space $L^2(\mu; \R^d)$ as follows.

\begin{definition}[Fr\'{e}chet Subdifferential]
	\label{def:frechet-differential}
	For $\mu \in \cP_2(\R^d)$, let $L^2(\mu; \R^d)$ be the collection of vector fields $\boldsymbol{\xi} \colon \R^d \to \R^d$ satisfying $
	\|\boldsymbol{\xi}\|_{L^2(\mu; \R^d)}^2 := \int_{\R^d} \|\boldsymbol{\xi}(x)\|^2 \dd{\mu(x)} < \infty$. For a lower semi-continuous functional $\cE$ and $\mu \in D(\cE)$, we say that $\boldsymbol{\xi} \in L^2(\mu; \R^d)$ belongs to the Fr\'{e}chet subdifferential $\partial \cE(\mu)$ if 
	\begin{equation}
		\label{eq:frechet}
		\liminf_{\nu \to \mu} \frac{\cE(\nu) - \cE(\mu) - \int_{\R^d} \langle \boldsymbol{\xi}(x), \bt_{\mu}^{\nu}(x) - x \rangle \dd{\mu(x)}}{W_2(\nu, \mu)} \ge 0 \; ,
	\end{equation}
	where $\liminf_{\nu \to \mu}$ is based on the convergence under $W_2$. 
\end{definition}

Definition \ref{def:frechet-differential} is a local definition as one can see from \eqref{eq:frechet}. For a displacement convex functional, however, Fr\'{e}chet subdifferentials admit a global characterization. Moreover, as the next result shows, we can find a unique member of the Fr\'{e}chet subdifferential with the smallest norm; see Section 10.1 of \cite{Ambrosio_2005}.

\begin{lemma}[Minimal Selection Principle]
	\label{def:minimal-selection}
	Let $\cE$ be a lower semi-continuous and displacement convex functional, then a vector field $\boldsymbol{\xi}$ belongs to the Fr\'{e}chet subdifferential $\partial \cE(\mu)$ if and only if 
	\begin{equation*}
		\cE(\nu) - \cE(\mu) \geq \int_{\R^d} \langle \boldsymbol{\xi}(x), \bt_{\mu}^{\nu}(x) - x \rangle \dd{\mu(x)} \quad \forall \nu \in \cP_2^r(\R^d) \; .
	\end{equation*}
	In this case, $\partial \cE(\mu)$ has a unique element $\partial^{o} \cE(\mu)$ called the \emph{minimal selection} with the smallest norm in the following sense:
	\begin{equation*}
		\partial^{o} \cE(\mu) = \argmin \{ \|\boldsymbol{\xi} \|^2_{L^2(\mu;\R^d)} : \boldsymbol{\xi} \in \partial \cE(\mu)\} \; .
	\end{equation*}
\end{lemma}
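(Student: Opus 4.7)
The plan is to treat the lemma as two separate assertions: (a) the global characterization of $\partial \cE(\mu)$ under displacement convexity, and (b) the existence and uniqueness of a norm-minimizing element $\partial^o \cE(\mu)$. For (a), the ``$\Leftarrow$'' direction is immediate: if the global inequality holds, the numerator in \eqref{eq:frechet} is non-negative for every $\nu \in \cP_2^r(\R^d)$, forcing the liminf to be $\geq 0$, so $\boldsymbol{\xi} \in \partial \cE(\mu)$. The substantive content is the ``$\Rightarrow$'' direction.

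For the ``$\Rightarrow$'' direction, my approach is to upgrade the local Fréchet condition to a global one by probing it along the displacement interpolation $\pi_\eta := \pi_\eta^{\mu \to \nu}$ of Definition~\ref{def:displacement-convex}. Two standard facts from optimal-transport theory (Chapter~7 of \cite{Ambrosio_2005}) drive the argument: first, the interpolation is a constant-speed Wasserstein geodesic, so $W_2(\mu, \pi_\eta) = \eta \, W_2(\mu, \nu)$ and $\pi_\eta \to \mu$ in $W_2$ as $\eta \to 0^+$; second, Brenier's theorem identifies $\bt_\mu^{\pi_\eta} = (1-\eta)\bi + \eta \, \bt_\mu^\nu$, so $\bt_\mu^{\pi_\eta}(x) - x = \eta(\bt_\mu^\nu(x) - x)$. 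Substituting $\nu \leftarrow \pi_\eta$ into \eqref{eq:frechet} and using the liminf condition yields, for any $\varepsilon > 0$ and all sufficiently small $\eta > 0$,
\begin{equation*}
\cE(\pi_\eta) - \cE(\mu) \;\geq\; \eta \int_{\R^d} \langle \boldsymbol{\xi}(x), \bt_\mu^\nu(x) - x \rangle \dd{\mu(x)} \,-\, \varepsilon \, \eta \, W_2(\mu, \nu) \; .
\end{equation*}
Displacement convexity bounds the left-hand side from above by $\eta\bigl(\cE(\nu) - \cE(\mu)\bigr)$; dividing by $\eta > 0$ and sending $\varepsilon \to 0$ then produces the global inequality claimed in the statement.

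For (b), note that the global inequality is linear in $\boldsymbol{\xi}$, so $\partial \cE(\mu)$ is a convex subset of the Hilbert space $L^2(\mu; \R^d)$. It is also $L^2$-closed: whenever $\boldsymbol{\xi}_n \to \boldsymbol{\xi}$ in $L^2(\mu; \R^d)$, Cauchy-Schwarz (using $\bt_\mu^\nu - \bi \in L^2(\mu; \R^d)$, which follows from $\mu, \nu \in \cP_2(\R^d)$ via Lemma~\ref{def:brenier-partial}) allows us to pass to the limit in the linear functional $\boldsymbol{\zeta} \mapsto \int \langle \boldsymbol{\zeta}, \bt_\mu^\nu - \bi \rangle \dd{\mu}$ for each fixed $\nu$, preserving the defining inequality. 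The Hilbert projection theorem applied to the origin then supplies a unique norm-minimizer $\partial^o \cE(\mu)$.

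The principal technical obstacle is the structural identity $\bt_\mu^{\pi_\eta} = (1-\eta)\bi + \eta \, \bt_\mu^\nu$: this is not an automatic consequence of Lemma~\ref{def:brenier-partial} applied to the pair $(\mu, \pi_\eta)$, but rests on Brenier's characterization of optimal maps as gradients of convex functions, together with the fact that any convex combination of $\bi = \nabla(\tfrac{1}{2}\|x\|^2)$ and the convex-gradient map $\bt_\mu^\nu$ remains a gradient of a convex function. I would invoke this identity from Chapter~7 of \cite{Ambrosio_2005} rather than reproving it; everything else — the liminf-to-limit passage and the Hilbert projection argument — is routine once the identity is in hand.
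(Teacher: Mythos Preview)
The paper does not supply its own proof of this lemma: it is stated as a known result with a pointer to Section~10.1 of \cite{Ambrosio_2005} (see the sentence immediately preceding the lemma), so there is no in-paper argument to compare against.

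Your reconstruction is correct and is essentially the standard argument from Ambrosio--Gigli--Savar\'e. Probing the Fr\'echet liminf condition along the displacement interpolation $\pi_\eta^{\mu\to\nu}$, using the geodesic identities $W_2(\mu,\pi_\eta)=\eta\,W_2(\mu,\nu)$ and $\bt_\mu^{\pi_\eta}=(1-\eta)\bi+\eta\,\bt_\mu^\nu$, and then invoking displacement convexity to bound $\cE(\pi_\eta)-\cE(\mu)\le\eta(\cE(\nu)-\cE(\mu))$ is exactly how the local condition is upgraded to the global one. For part~(b), your argument that the global inequality exhibits $\partial\cE(\mu)$ as a closed convex subset of the Hilbert space $L^2(\mu;\R^d)$, whence the Hilbert projection theorem delivers a unique minimal-norm element, is also the standard route. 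Two minor points you might make explicit: the case $\nu=\mu$ (where $W_2(\mu,\nu)=0$) is handled trivially since both sides of the global inequality vanish, and part~(b) tacitly assumes $\partial\cE(\mu)\neq\emptyset$, consistent with the lemma's phrasing ``In this case''.
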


As we shall see in Theorem~\ref{cor:OLT-discrete}, the minimal selection $\partial^{o} \cE(\mu)$ plays a role analogous to a gradient in providing a regret bound for OLT comparable to \eqref{eqn:euclidean} and \eqref{eqn:entropy}. We conclude this subsection with two canonical examples of the minimal selection principle.

\begin{example}[Potential Functional]
	\label{def:potential}
	Given a lower semi-continuous function $V \colon \R^d \rightarrow (-\infty, \infty]$, we call $\cV \colon \cP_2(\R^d) \rightarrow (-\infty, \infty]$ a potential functional associated with $V$ if
	\begin{equation*}
		\cV(\mu) := \int_{\R^d} V(x) \dd{\mu(x)}
	\end{equation*}
	and $\partial^o \cV(\mu) = \nabla V$ if $V$ is convex and satisfies some regularity conditions; see Section 10.4 of \cite{Ambrosio_2005}.
\end{example}

\begin{example}[Interaction Functional]
	\label{def:interaction}
	Given a lower semi-continuous function $W \colon \R^d \rightarrow [0, \infty)$, we call $\cW \colon \cP_2(\R^d) \rightarrow [0, \infty]$ an interaction functional associated with $W$ if
	\begin{equation*}
		\cW(\mu) := \int_{\R^d} \int_{\R^d} W(x - y) \dd{\mu(x)} \dd{\mu(y)} \; 
	\end{equation*}
	and $\partial^o \cW(\mu) = \nabla W * \rho$, where $\mu = \rho \cdot \cL^d$, if $W$ is convex and satisfies some regularity conditions; see Section 10.4 of \cite{Ambrosio_2005}. Here, $*$ denotes a convolution of vector field $\nabla W$ and density $\rho$. 
\end{example}

\subsection{Regret Bound for OLT}
\label{sec:regret-bound-OLT}
We are ready to derive a regret bound for the OLT problem. The following result, often referred to as Evolution Variational Inequality (EVI) in the PDE literature, clarifies why the minimal selection principle is key to obtaining a regret bound. 

\begin{lemma}[EVI]
	\label{prop:EVI}
	Let $\cE$ be a lower semi-continuous and displacement convex functional. Let $\mu \in D(\cE)$ and $\boldsymbol{\xi} \in \partial \cE(\mu)$. Fix $\eta > 0$ and define $\mu_\eta := (\bi - \eta \boldsymbol{\xi})_{\#} \mu$. Then, for any $\nu \in \cP_2^r(\R^d)$,
	\begin{equation}
		\label{eq:EVI-first}
		\cE(\mu) - \cE(\nu) \leq \frac{W_2^2(\mu, \nu) - W_2^2(\mu_\eta, \nu)}{2 \eta} + \frac{\eta}{2} \int_{\R^d} \|\boldsymbol{\xi}(x)\|^2 \dd{\mu(x)} \; .
	\end{equation}
\end{lemma}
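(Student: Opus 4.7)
The plan is to combine two ingredients: the global subdifferential characterization from Lemma \ref{def:minimal-selection} (which controls the loss gap $\cE(\nu)-\cE(\mu)$ in terms of $\boldsymbol{\xi}$ and the optimal transport map $\bt_\mu^\nu$), and an upper bound on $W_2^2(\mu_\eta,\nu)$ obtained by exhibiting an explicit, generally suboptimal coupling of $\mu_\eta$ and $\nu$. The inequality will then pop out by expanding a square and rearranging.

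For the coupling, I would define the map $T \colon \R^d \to \R^d \times \R^d$ by $T(x) = (x - \eta \boldsymbol{\xi}(x),\, \bt_\mu^\nu(x))$ and push $\mu$ forward under $T$. The first marginal of $T_\# \mu$ is $(\bi - \eta\boldsymbol{\xi})_\# \mu = \mu_\eta$ by definition, and the second marginal is $(\bt_\mu^\nu)_\# \mu = \nu$ by Lemma \ref{def:brenier-partial}, so $T_\# \mu \in \Pi(\mu_\eta, \nu)$. A preliminary check (using $\boldsymbol{\xi} \in L^2(\mu;\R^d)$ and $\mu \in \cP_2(\R^d)$) confirms that $\mu_\eta \in \cP_2(\R^d)$, so $W_2(\mu_\eta,\nu)$ is finite. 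Using this coupling in the definition of $W_2^2$ gives
\begin{equation*}
W_2^2(\mu_\eta,\nu) \;\leq\; \int_{\R^d} \bigl\| (x - \eta \boldsymbol{\xi}(x)) - \bt_\mu^\nu(x) \bigr\|^2 \dd{\mu(x)}.
\end{equation*}

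Expanding the square and using $W_2^2(\mu,\nu) = \int \|x - \bt_\mu^\nu(x)\|^2 \dd\mu$ from Lemma \ref{def:brenier-partial}, I obtain
\begin{equation*}
W_2^2(\mu_\eta,\nu) \;\leq\; W_2^2(\mu,\nu) \;+\; 2\eta \int_{\R^d} \langle \boldsymbol{\xi}(x),\, \bt_\mu^\nu(x) - x \rangle \dd{\mu(x)} \;+\; \eta^2 \int_{\R^d} \|\boldsymbol{\xi}(x)\|^2 \dd{\mu(x)}.
\end{equation*}
The cross term is exactly what the global subdifferential inequality from Lemma \ref{def:minimal-selection} controls: since $\boldsymbol{\xi} \in \partial \cE(\mu)$ and $\cE$ is displacement convex,
\begin{equation*}
\int_{\R^d} \langle \boldsymbol{\xi}(x),\, \bt_\mu^\nu(x) - x \rangle \dd{\mu(x)} \;\leq\; \cE(\nu) - \cE(\mu).
\end{equation*}
Substituting and dividing by $2\eta > 0$ rearranges to the desired EVI.

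The main obstacle is essentially conceptual rather than technical: one has to resist the urge to use the (possibly non-unique or hard to characterize) optimal coupling between $\mu_\eta$ and $\nu$, and instead accept an explicit suboptimal one built by composing the gradient step with the Brenier map $\bt_\mu^\nu$. Once that coupling is in hand, the rest is an expansion of a square followed by a single application of the subgradient inequality from Lemma \ref{def:minimal-selection}; the absolute continuity of $\mu_\eta$ is never required, which is why the statement only demands $\nu \in \cP_2^r(\R^d)$ (so that $\bt_\mu^\nu$ is well defined) and not $\mu_\eta \in \cP_2^r(\R^d)$.
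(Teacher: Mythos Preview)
Your proof is correct and follows essentially the same route as the paper: construct the coupling $(\bi-\eta\boldsymbol{\xi},\bt_\mu^\nu)_\#\mu \in \Pi(\mu_\eta,\nu)$ to upper bound $W_2^2(\mu_\eta,\nu)$, expand the square, and invoke the global subdifferential inequality from Lemma~\ref{def:minimal-selection}. The only cosmetic difference is that the paper writes the expansion as a lower bound on $\tfrac{1}{2\eta}(W_2^2(\mu,\nu)-W_2^2(\mu_\eta,\nu))$ rather than an upper bound on $W_2^2(\mu_\eta,\nu)$, but the algebra is identical.
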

\begin{proof}
	Recall from Lemma \ref{def:brenier-partial} that $W_2^2(\mu, \nu) = \int_{\R^d} \| x - \bt_{\mu}^\nu (x) \|^2 \dd{\mu(x)}$.	Let $(\bi - \eta\boldsymbol{\xi}, \bt_{\mu}^{\nu})$ be the map from $\R^d$ to $\R^{2 d}$ that maps $x \mapsto (x - \eta \boldsymbol{\xi}(x), \bt_{\mu}^{\nu}(x))$, then $(\bi - \eta \boldsymbol{\xi}, \bt_{\mu}^\nu )_{\#} \mu \in \Pi(\mu_\eta, \nu)$	and thus
	\begin{equation*}
		W_2^2(\mu_\eta, \nu) = \inf_{\gamma \in \Pi(\mu_\eta, \nu)} \int_{\R^d} \|x - y\|^2 \dd{\gamma(x, y)} \leq \int_{\R^d} \| (x - \eta \boldsymbol{\xi}(x)) - \bt_{\mu}^\nu (x) \|^2 \dd{\mu(x)} \; .
	\end{equation*}
	Hence,
	\begin{align*}
		\frac{W_2^2(\mu, \nu) - W_2^2(\mu_\eta, \nu)}{2\eta} 
		& \geq \frac{1}{2\eta} \int_{\R^d} \left(\| x - \bt_{\mu}^\nu (x) \|^2 - \| x - \eta \boldsymbol{\xi}(x) - \bt_{\mu}^\nu (x) \|^2 \right) \dd{\mu(x)} \\
		& = \int_{\R^d} \langle \boldsymbol{\xi}(x), x - \bt_{\mu}^\nu (x) \rangle \dd{\mu(x)} - \frac{\eta}{2} \int_{\R^d} \|\boldsymbol{\xi}(x) \|^2 \dd{\mu(x)} \\
		& \ge \cE(\mu) - \cE(\nu) - \frac{\eta}{2} \int_{\R^d} \|\boldsymbol{\xi}(x) \|^2 \dd{\mu(x)} \quad (\because \text{Lemma \ref{def:minimal-selection}}) \; .
	\end{align*}	
\end{proof}
From \eqref{eq:EVI-first}, it is now obvious that the minimal selection $\boldsymbol{\xi} = \partial^{o} \cE(\mu)$ gives the smallest upper bound. Also, notice the similarity of \eqref{eqn:euclidean}, \eqref{eqn:entropy}, and \eqref{eq:EVI-first}; the last term on the right-hand side of \eqref{eq:EVI-first} corresponds to the norm of a gradient in \eqref{eqn:euclidean} and \eqref{eqn:entropy}. The minimal selection enables us to import bounding techniques in \eqref{eqn:euclidean} and \eqref{eqn:entropy} into the OLT problem. Based on this connection, we propose a strategy to tackle the OLT: for each time $t$, the player finds the minimal selection $\boldsymbol{\xi}_t = \partial^{o} \cE_t(\mu_t)$ and updates $\mu_{t+1} = (\bi - \eta \boldsymbol{\xi}_t)_{\#} \mu_t$. Using \eqref{eq:EVI-first}, we obtain the following regret bound.

\begin{theorem}[Regret Bound: Discrete-Time] 
	\label{cor:OLT-discrete}
	Assume $\cE_t$ of the OLT is displacement convex for all $t \in [T]$ and $T \in \N$. The player selects $\boldsymbol{\xi}_t = \partial^{o} \cE_t(\mu_t)$ and updates $\mu_{t+1} = (\bi - \eta \boldsymbol{\xi}_t)_{\#} \mu_t$ for each round $t$, where $\eta > 0$ is a fixed stepsize. Then, for any $\nu \in \cP^r_2(\R^d)$, 
	\begin{equation}
		\label{eq:olt-discrete}
		\sum_{t = 1}^{T} \cE_t(\mu_t) -  \sum_{t = 1}^T \cE_t(\nu) \leq \frac{W_2^2(\mu_1, \nu) - W_2^2(\mu_{T+1}, \nu)}{2 \eta} + \frac{\eta}{2} \sum_{t = 1}^{T} \int_{\R^d} \|\boldsymbol{\xi}_t(x)\|^2 \dd{\mu_t(x)} \; .
	\end{equation}
\end{theorem}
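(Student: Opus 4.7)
The proof is a direct telescoping argument built on Lemma~\ref{prop:EVI}. The plan is to apply the EVI inequality at each round $t \in [T]$ with $\mu \leftarrow \mu_t$ and $\boldsymbol{\xi} \leftarrow \boldsymbol{\xi}_t = \partial^{o} \cE_t(\mu_t)$. Since the minimal selection is by definition an element of $\partial \cE_t(\mu_t)$ (Lemma~\ref{def:minimal-selection}), and since the MSoE update $\mu_{t+1} = (\bi - \eta \boldsymbol{\xi}_t)_{\#} \mu_t$ coincides exactly with the quantity $\mu_\eta$ defined in Lemma~\ref{prop:EVI}, the hypotheses of EVI are met at each round. For any fixed comparator $\nu \in \cP_2^r(\R^d)$, we therefore obtain the per-round inequality
$$\cE_t(\mu_t) - \cE_t(\nu) \leq \frac{W_2^2(\mu_t, \nu) - W_2^2(\mu_{t+1}, \nu)}{2\eta} + \frac{\eta}{2} \int_{\R^d} \|\boldsymbol{\xi}_t(x)\|^2 \dd{\mu_t(x)}.$$

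Next, I sum the per-round inequality over $t = 1, \ldots, T$. The first right-hand side term telescopes,
$$\sum_{t=1}^{T} \frac{W_2^2(\mu_t, \nu) - W_2^2(\mu_{t+1}, \nu)}{2\eta} = \frac{W_2^2(\mu_1, \nu) - W_2^2(\mu_{T+1}, \nu)}{2\eta},$$
while the residual $\frac{\eta}{2} \sum_t \int \|\boldsymbol{\xi}_t\|^2 \dd{\mu_t}$ is inherited unchanged. Rearranging yields exactly the claimed bound \eqref{eq:olt-discrete}, and conceptually this residual is the infinite-dimensional analogue of the squared-gradient-norm terms appearing on the right-hand sides of \eqref{eqn:euclidean} and \eqref{eqn:entropy}.

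The one subtlety I anticipate is making sure Lemma~\ref{prop:EVI} is legitimately applicable at \emph{every} round, not just round $t=1$. This requires $\mu_t \in D(\cE_t) \subseteq \cP_2^r(\R^d)$ so that both Brenier's map $\bt_{\mu_t}^{\nu}$ (Lemma~\ref{def:brenier-partial}) and the minimal selection $\partial^{o} \cE_t(\mu_t)$ are well defined. In other words, the MSoE iteration must preserve absolute continuity along the trajectory, which is standard to guarantee provided $\eta$ is small enough that $\bi - \eta \boldsymbol{\xi}_t$ remains, e.g., injective on the support of $\mu_t$ (so that the pushforward has a density by the change-of-variables formula). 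Everything else in the argument is essentially algebra: the hard conceptual work has already been absorbed into Lemmas~\ref{def:minimal-selection} and~\ref{prop:EVI}, and the discrete-time regret bound then falls out by the same one-step-plus-telescope pattern that gives \eqref{eqn:euclidean} in the Euclidean case.
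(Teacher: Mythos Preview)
Your proposal is correct and follows exactly the paper's approach: the paper itself simply states that the theorem follows ``using \eqref{eq:EVI-first}'', i.e., apply Lemma~\ref{prop:EVI} at each round $t$ and telescope the Wasserstein terms. One terminological slip: the update $\mu_{t+1} = (\bi - \eta \boldsymbol{\xi}_t)_{\#}\mu_t$ here is the \emph{minimal selection} update, not the MSoE algorithm (which is introduced later in Section~\ref{sec:zeroth} and involves exploration steps); otherwise the argument and the regularity caveat you flag are spot on.
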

\begin{remark}
	\label{rmk:regret}
 Under mild conditions such as $\mu_t$ and $\nu$ are supported on a bounded domain and $\boldsymbol{\xi}_t$ are uniformly bounded, we have $W_2(\mu_t, \nu) \le D$ and $\|\boldsymbol{\xi}_t\|_{L^2(\mu_t; \R^d)} \le L$ for all $t$ for some constants $D, L > 0$. Then, the right-hand side of \eqref{eq:olt-discrete} is upper bounded by $\frac{D^2}{2 \eta} + \frac{\eta L^2 T}{2}$, which becomes $D L \sqrt{T}$ for $\eta = \frac{D}{L \sqrt{T}}$ and yields $\sqrt{T}$-regret.
\end{remark}

\begin{remark}
	We clarify that the EVI is an existing technique in the PDE literature and has already been used for different purposes, for instance, see \cite{Durmus_2017} and \cite{Salim_2020}. We emphasize, however, that Theorem \ref{cor:OLT-discrete} utilizes such a technique for the first time in the context of the online learning given a sequence of adversarial functionals $(\cE_t)_{t = 1}^{T}$, and an arbitrary reference measure $\nu$.
\end{remark}

\subsection{Connections to Wasserstein Gradient Flow}

We conclude this section with a connection between online learning and Wasserstein gradient flow through a lens of continuous-time OLT, further highlighting why it is natural to employ insights from PDEs. In the infinitesimal limit as $\eta \rightarrow 0$, the algorithm solving OLT amounts to a Wasserstein gradient flow. To see this, consider the steepest descent on Wasserstein space according to the energy functional $\cE_t$ at time $t$, 
$
	\mu_{t+\eta} := \argmin_{\mu \in \cP_2(\R^d)} \cE_t(\mu) + \frac{1}{2\eta} W_2^2(\mu, \mu_t).
$
In the infinitesimal limit, the steepest descent defines continuous-time evolution of probability measures $(\mu_t)_{t \in [0, T]}$, naturally described by PDEs. The following continuity equation is the natural counterpart of discrete update of the steepest descent: letting $\mu_t = \rho_t \cdot \cL^d$, solve
\begin{equation}
	\label{eq:online-wgf}
	\frac{\partial \rho_t}{\partial t}
	= \nabla \cdot \left(\rho_t \boldsymbol{\xi}_{t}\right) \quad \text{where} ~~ \boldsymbol{\xi}_t \in \partial \cE_t(\mu_t) \; .
\end{equation}
Therefore, it is clear that our online OLT is a discrete analogue of the online Wasserstein gradient flow, with a time-varying energy functional $\cE_t$. A solution $(\mu_t)_{t \in [0, T]}$ enjoys the following regret bound; see Appendix \ref{sec:continuous-time} for the proof. 
\begin{theorem}[Regret Bound: Continuous-Time]
	\label{thm:continuous-regret}
	Assume $\cE_t$ is displacement convex for all $t \in [0, T]$. Under mild regularity assumptions, a solution $(\mu_t)_{t \in [0, T]}$ to \eqref{eq:online-wgf} satisfies 
	\begin{equation}
		\int_{0}^{T} \cE_t(\mu_t) \dd{t} - \int_{0}^{T} \cE_t(\nu) \dd{t}
		\le \frac{W_{2}^{2}(\mu_{0}, \nu) - W_{2}^{2}(\mu_{T}, \nu)}{2} \quad \forall \nu \in \cP^r_2(\R^d) \; . 
	\end{equation}
\end{theorem}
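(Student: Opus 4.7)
The plan is to take Lemma~\ref{prop:EVI} to the infinitesimal limit $\eta \to 0^+$, obtaining a pointwise-in-time differential inequality for $W_2^2(\mu_t, \nu)$, and then integrate on $[0,T]$. Concretely, I aim to establish that under the continuity equation~\eqref{eq:online-wgf} with $\boldsymbol{\xi}_t \in \partial \cE_t(\mu_t)$ and $\cE_t$ displacement convex, the curve $t \mapsto W_2^2(\mu_t, \nu)$ is absolutely continuous and satisfies
\begin{equation*}
	\frac{d}{dt} \tfrac{1}{2} W_2^2(\mu_t, \nu) \leq \cE_t(\nu) - \cE_t(\mu_t) \quad \text{for a.e.\ } t \in [0,T].
\end{equation*}
Integrating this on $[0,T]$ and rearranging yields $\int_0^T \cE_t(\mu_t) \dd{t} - \int_0^T \cE_t(\nu) \dd{t} \leq \tfrac{1}{2}\big(W_2^2(\mu_0, \nu) - W_2^2(\mu_T, \nu)\big)$, which is the claimed bound.

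To justify the differential inequality, one route is to invoke the chain rule for the squared Wasserstein distance along an absolutely continuous curve whose velocity lies in the tangent cone (Theorem~8.4.7 of \cite{Ambrosio_2005}), which gives
\begin{equation*}
	\frac{d}{dt} \tfrac{1}{2} W_2^2(\mu_t, \nu) = -\int_{\R^d} \langle \boldsymbol{\xi}_t(x), \bt_{\mu_t}^{\nu}(x) - x \rangle \dd{\mu_t(x)}.
\end{equation*}
A more self-contained derivation mirrors the proof of Lemma~\ref{prop:EVI}: setting $\mu_\eta = (\bi - \eta \boldsymbol{\xi}_t)_{\#} \mu_t$, the coupling $(\bi - \eta\boldsymbol{\xi}_t, \bt_{\mu_t}^{\nu})_{\#}\mu_t$ yields
\begin{equation*}
	\frac{W_2^2(\mu_\eta, \nu) - W_2^2(\mu_t, \nu)}{2\eta} \leq -\int_{\R^d} \langle \boldsymbol{\xi}_t(x), \bt_{\mu_t}^{\nu}(x) - x \rangle \dd{\mu_t(x)} + \tfrac{\eta}{2} \|\boldsymbol{\xi}_t\|_{L^2(\mu_t; \R^d)}^2,
\end{equation*}
and since $\mu_\eta$ agrees with $\mu_{t+\eta}$ to first order in $\eta$ by~\eqref{eq:online-wgf}, sending $\eta \to 0^+$ kills the quadratic remainder. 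Lemma~\ref{def:minimal-selection} then upgrades the right-hand side to $\cE_t(\nu) - \cE_t(\mu_t)$, closing the step. Notice that, because the limit is taken, the transport-cost penalty $\tfrac{\eta}{2} \sum_t \|\boldsymbol{\xi}_t\|^2$ appearing in Theorem~\ref{cor:OLT-discrete} simply vanishes in the continuous-time bound.

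The main obstacle is rigorously justifying the chain rule, which is exactly where the unspecified \emph{mild regularity assumptions} enter. Sufficient conditions include: joint measurability of $(t,x) \mapsto \boldsymbol{\xi}_t(x)$; local integrability of $t \mapsto \|\boldsymbol{\xi}_t\|_{L^2(\mu_t; \R^d)}$ (which ensures absolute continuity of $t \mapsto \mu_t$ in $(\cP_2(\R^d), W_2)$); membership of $\boldsymbol{\xi}_t$ in the tangent cone at $\mu_t$ (automatic for the minimal selection $\partial^o \cE_t(\mu_t)$); and measurability of $t \mapsto \cE_t(\mu_t)$ so the integrals are well-defined. Once these are in place, the rest of the argument is a one-line integration, and the theorem should be read as the clean $\eta \to 0$ shadow of Theorem~\ref{cor:OLT-discrete}.
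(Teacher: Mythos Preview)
Your proposal is correct and follows essentially the same route as the paper's proof: differentiate $t \mapsto W_2^2(\mu_t,\nu)$ along the curve (the paper invokes Theorem~8.13 of \cite{Villani_2003}, you invoke Theorem~8.4.7 of \cite{Ambrosio_2005}, which play the same role), then apply the global subdifferential characterization of Lemma~\ref{def:minimal-selection} to obtain $\tfrac{d}{dt}\tfrac12 W_2^2(\mu_t,\nu)\le \cE_t(\nu)-\cE_t(\mu_t)$ and integrate on $[0,T]$. The paper specifies its ``mild regularity'' as $\boldsymbol{\xi}_t$ being $C^1$ in $(t,x)$ and globally bounded, whereas you list weaker AGS-style conditions; your alternative ``$\eta\to 0$ from Lemma~\ref{prop:EVI}'' derivation is a nice bonus but not needed, since both you and the paper ultimately rely on the Wasserstein chain rule.
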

\begin{remark}
	If $\cE_t$ is time-invariant, say $\cE_t = \cE$ for all $t \in [0, T]$, then \eqref{eq:online-wgf} amounts to finding a solution to the standard Wasserstein gradient flow given $\cE$ (Chapter 11 of \cite{Ambrosio_2005}); minimizing the regret is exactly minimizing a functional $\cE$ by solving a Wasserstein gradient flow. Note that the RHS is time-independent, namely, we have bounded total regret and fast rate as $1/T$.
\end{remark} 

\section{Minimal Selection Algorithm with Zeroth-Order Information}
\label{sec:zeroth}

In this section, we propose a more practical framework for OLT and methods to solve it based on only zeroth-order, partial feedback. First, let us briefly explain our motivation for studying this practical framework. Recall that the strategy we studied in the previous section to solve OLT was to find the minimal selection $\boldsymbol{\xi}_t = \partial^{o} \cE_t(\mu_t)$ and update $\mu_t$ to $(\bi - \eta \boldsymbol{\xi}_t)_\# \mu_t$ for all $t$. The EVI gave us a regret bound as in Theorem \ref{cor:OLT-discrete}. However, such a strategy is difficult to implement in practice. Finding $\partial^{o} \cE_t(\mu_t)$ requires the player to access the Fr{\'e}chet subdifferential $\partial \cE_t(\mu_t)$ for each $t$. Such information is not directly available as nature typically only reveals the loss $\cE_t(\mu_t)$ and not the entire collection of vector fields in $\partial^{o} \cE_t(\mu_t)$. For instance, suppose $\cE_t$ is a potential functional associated with $V_t \colon \R^d \to \R$, then $\nabla V_t = \partial^{o} \cE_t(\mu_t)$ due to Example \ref{def:potential}. It is not obvious how the player can determine the vector field $\nabla V_t$ based only on the zeroth-order information $V_t$.

As a result, we need a more reasonable setting to implement a strategy based on the minimal selection principle. We achieve this goal by querying only through zeroth-order information, drawing a parallel to the bandit/partial feedback setting with finite arms. To make our discussion concrete, we focus on the case where $\cE_t = \cV_t$ (potential functional) that occurs frequently in applications; the loss functional is a potential functional associated with some $V_t \colon \R^d \to \R$ for all $t \in \N$. An extension to the interaction functional, another common setting in applications, will be discussed later in this section.  Under this framework, nature reveals $\cV_t(\mu_t)$ by telling the values of $V_t$ \textbf{only} on the support of $\mu_t$ and some fixed set of grid points. As we will see shortly, such a setting enables the player to execute a strategy based on the minimal selection principle, thereby obtaining a regret bound based on the EVI. In other words, instead of observing the gradient $\nabla V_t$ (minimal selection), the player approximates it using only zeroth-order information. As such, our formulation bridges the gap between the sophisticated theory in the previous section and its practical realization.

\subsection{OLT with Zeroth-Order Information and Minimal Selection Algorithm}
We formally define our online learning problem. Fix a domain $\Omega \subset \R^d$ and a set $Z \subset \Omega$ of grid points or hubs, say $Z = \{z_1, \ldots, z_n\}$, known to the player.\footnote{For instance, $z_i$'s could be hub points (in ridesharing applications), be stochastically sampled based on some reference measure, or be grid points for discretization. } At round $t \in \N$,
\begin{itemize}
	\item the player chooses a discrete measure $\mu_t := \frac{1}{m} \sum_{j = 1}^m \delta_{x_j^t}$, where we call $x_1^t, \ldots, x_m^t \in \Omega$ decision points,
	\item nature reveals a lower semi-continuous function $V_t \colon \R^d \to \R$ only on $\{x_j^t\}_{j \in [m]} \cup Z$, namely the zeroth-order information on the player's decision points and the fixed grid points, and the player suffers a loss given as the potential functional associated with $V_t$, that is, $\cV_t(\mu_t) = \frac{1}{m} \sum_{j = 1}^m V_t(x_j^t)$ (Example \ref{def:potential}).
\end{itemize}

Here, we view the grid points as the canonical locations of the domain $\Omega$. Meanwhile, decision points $x_j^t$ are any elements of $\Omega$, not necessarily the grid points. By using the zeroth-information $V_t(z_i)$, the player competes against the grid points, meaning that she aims to choose her decision points that would incur smaller losses than the grid points would do. Accordingly, we consider a regret $\sum_{t = 1}^T \cV_t(\mu_t) - \sum_{t = 1}^T \cV_t(\nu)$ for any $\nu \in \cP_2(\R^d)$ supported on $Z$.

Inspired by the minimal selection principle (Lemma \ref{def:minimal-selection}), we propose an algorithm for this learning problem.

\begin{definition}[Minimal Selection Algorithm]
	After round $t \in \N$, the player solves
	\begin{align}
		\min_{\xi_1, \ldots, \xi_m \in \R^d} \quad & \frac{1}{m} \sum_{j=1}^m \| \xi_j \|^2 \label{eq:obj1} \\
		\text{subject to} \quad & V_t(x_j^t) - V_t(z_i) \leq \langle \xi_j, x_j^t - z_i \rangle ~~ \forall (i, j) \in [n] \times [m] \; . \label{eq:obj2}
	\end{align}
	After obtaining a minimizer $(\xi_1^t, \ldots, \xi_m^t)$, the player updates $x_j^{t + 1} = x_j^{t} - \eta \xi_j^t$, where $\eta > 0$ is a suitable stepsize.
\end{definition}

\begin{remark}
	\label{rmk:unique}
	Note that this is a convex program. Also, there exists a unique minimizer $(\xi_1^t, \ldots, \xi_m^t)$ provided the constraint \eqref{eq:obj2} is feasible.
\end{remark}

Not surprisingly, this algorithm amounts to the minimal section principle in a discrete setting. Recall that $\|\boldsymbol{\xi}\|_{L^2(\mu_t; \R^d)}^2 = \frac{1}{m} \sum_{j = 1}^{m} \|\boldsymbol{\xi}(x_j^t)\|^2$ for any $\boldsymbol{\xi} \in L^2(\mu_t; \R^d)$, hence the objective function \eqref{eq:obj1} corresponds to the norm of an element of the Fr{\'e}chet subdifferential $\partial \cV_t(\mu_t)$. Meanwhile,  constraint \eqref{eq:obj2} amounts to finding a certified subgradient of $V_t$ at $x_j^t$. Recall from Example \ref{def:potential} that $\partial^{o} \cV_t(\mu_t) = \nabla V_t$. Essentially, this algorithm aims to find a minimizer $\xi_j^t = \nabla V_t(x_j^t)$. Also, the update rule $x_j^{t + 1} = x_j^t - \eta \xi_j^t$ corresponds to $\mu_{t + 1} = (\bi - \eta \boldsymbol{\xi}_t)_\# \mu_t$ in Theorem \ref{cor:OLT-discrete}, where $\boldsymbol{\xi}_t(x_j^t) = \xi_j^t$. 

Now, we can utilize the EVI to obtain a regret bound. By adapting the proof of Lemma \ref{prop:EVI}, we obtain the following result similar to Theorem \ref{cor:OLT-discrete}; see Appendix \ref{sec:prop:cvx} for the proof.

\begin{theorem}
	\label{prop:cvx}
	Assume $\Omega = \R^d$. Let $\mu_t$ and $\xi_j^t$ be the output of the minimal selection algorithm. If $V_t$ is convex for all $t \in \N$, then for any measure $\nu \in \cP_2(\R^d)$ supported on $Z$ and any $\eta>0$,
	\begin{equation}
		\label{eq:convex-upper-tele}
		\sum_{t = 1}^{T} \cV_t(\mu_t) - \sum_{t = 1}^{T} \cV_t(\nu) 
		\le 
		\frac{W_2^2(\mu_1, \nu) - W_2^2(\mu_{T + 1}, \nu)}{2 \eta} + \frac{\eta}{2} \sum_{t = 1}^{T} \left(\frac{1}{m} \sum_{j = 1}^m \|\xi_j^t\|^2\right) \; .
	\end{equation}
\end{theorem}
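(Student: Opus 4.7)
The plan is to mimic the per-round EVI argument of Lemma~\ref{prop:EVI} in the discrete setting, with two substitutions: the Brenier map $\bt_{\mu_t}^\nu$ is replaced by an optimal coupling between the atomic measures $\mu_t$ and $\nu$, and the global subdifferential inequality from Lemma~\ref{def:minimal-selection} is replaced by the grid-point constraint~\eqref{eq:obj2}. Once a per-round inequality is in hand, summing over $t$ will telescope the Wasserstein terms and yield \eqref{eq:convex-upper-tele}.

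First I would fix the set-up: since $\nu$ is supported on $Z$, write $\nu = \sum_{i=1}^n \beta_i \delta_{z_i}$ with $\beta_i \ge 0$ and $\sum_i \beta_i = 1$; then pick an optimal plan $\gamma_t \in \Pi_o(\mu_t, \nu)$, encoded by weights $\pi_{ji}^t \ge 0$ satisfying $\sum_i \pi_{ji}^t = 1/m$, $\sum_j \pi_{ji}^t = \beta_i$, and $W_2^2(\mu_t, \nu) = \sum_{j,i} \pi_{ji}^t \|x_j^t - z_i\|^2$. Next, place the same weights $\pi_{ji}^t$ on the pairs $(x_j^{t+1}, z_i)$: the marginals are preserved, so this is a valid coupling in $\Pi(\mu_{t+1}, \nu)$ and, using $x_j^{t+1} = x_j^t - \eta \xi_j^t$,
\[
W_2^2(\mu_{t+1}, \nu) \le \sum_{j,i} \pi_{ji}^t \|x_j^t - \eta \xi_j^t - z_i\|^2 = W_2^2(\mu_t, \nu) - 2\eta \sum_{j,i} \pi_{ji}^t \langle \xi_j^t, x_j^t - z_i\rangle + \frac{\eta^2}{m}\sum_{j=1}^m \|\xi_j^t\|^2,
\]
where the last term uses $\sum_i \pi_{ji}^t = 1/m$.

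To close the loop, I would invoke \eqref{eq:obj2}: $V_t(x_j^t) - V_t(z_i) \le \langle \xi_j^t, x_j^t - z_i\rangle$. Feasibility of this constraint—and hence existence of a minimizer $\xi_j^t$ (Remark~\ref{rmk:unique})—is guaranteed by convexity of $V_t$, since any element of $\partial V_t(x_j^t)$ satisfies the inequality. Weighting by $\pi_{ji}^t$ and summing, the marginal conditions give $\sum_{j,i} \pi_{ji}^t \langle \xi_j^t, x_j^t - z_i\rangle \ge \frac{1}{m}\sum_j V_t(x_j^t) - \sum_i \beta_i V_t(z_i) = \cV_t(\mu_t) - \cV_t(\nu)$. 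Substituting this back and rearranging produces the per-round inequality
\[
\cV_t(\mu_t) - \cV_t(\nu) \le \frac{W_2^2(\mu_t, \nu) - W_2^2(\mu_{t+1}, \nu)}{2\eta} + \frac{\eta}{2m}\sum_{j=1}^m \|\xi_j^t\|^2,
\]
and summing over $t = 1, \ldots, T$ telescopes the Wasserstein differences to yield \eqref{eq:convex-upper-tele}.

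The main obstacle is the coupling step: in Lemma~\ref{prop:EVI} the deterministic map $\bi - \eta \boldsymbol{\xi}$ composed with $\bt_{\mu}^\nu$ furnishes a coupling between $\mu_\eta$ and $\nu$ automatically via Lemma~\ref{def:brenier-partial}, but here both $\mu_t$ and $\nu$ are purely atomic, so Brenier's theorem does not apply and one must instead explicitly reuse the weights of $\gamma_t$ and verify the marginals by hand. Aside from this, convexity of $V_t$ enters only to guarantee feasibility of the convex program; minimality of $\xi_j^t$ is not used in the inequality itself but makes the transport-cost term on the right-hand side as small as possible.
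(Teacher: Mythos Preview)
Your proposal is correct and matches the paper's own proof essentially step for step: both write $\nu$ as a convex combination of Diracs on $Z$, take an optimal coupling $\pi^t$ between $\mu_t$ and $\nu$, reuse the same weights on the pairs $(x_j^{t+1},z_i)$ to upper-bound $W_2^2(\mu_{t+1},\nu)$, expand the square, apply the constraint~\eqref{eq:obj2} (feasibility coming from convexity of $V_t$), and telescope. Your closing remarks about the role of convexity and the coupling step are accurate and slightly more explicit than the paper's presentation.
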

\begin{remark}
	Remark that the RHS of the above equation is precisely the minimal objective value in \eqref{eq:obj1}.
	Additionally, if $\Omega \subsetneq \R^d$, the update rule $x_j^{t + 1} = x_j^{t} - \eta \xi_j^t$ may cause $x_j^{t + 1} \notin \Omega$. We can prevent this via a projection step. See Subsection \ref{sec:extensions} for details.
\end{remark}

\begin{remark}	
	We emphasize that the key of the aforementioned zeroth-order framework is that the decision points can take any values in $\Omega$, allowing the support of $\mu_t$ to change continuously on $\Omega$. On the contrary, if we had restricted the support of $\mu_t$ to be contained in $Z$, the decision points would have moved discontinuously over $Z$. In applications such as real-time dynamic resource allocation, such a discontinuous movement is impractical as one needs to allocate the decision points (resources such as drones or drivers) within a short period of time. In conclusion, our framework is more suitable in practice due to the continuously varying support of $\mu_t$. At the same time, our framework preserves the infinite-dimensional nature.
\end{remark}

\subsection{Beyond Convex Case: Exploration and Regret Bound}
If $V_t$ is non-convex, constraint set \eqref{eq:obj2} might be infeasible. We propose a simple modification of the algorithm using exploration and provide a regret bound that extends beyond the convex case. 

Note that we may consider the previous learning problem at each decision point separately. The minimal selection algorithm amounts to solving, for each $j \in [m]$,
\begin{equation}
	\label{eq:feasible}
	\begin{aligned}
		\min_{\xi_j \in \R^d} \quad \|\xi_j\|^2 \quad \text{subject to} ~~ V_t(x_j^t) - V_t(z_i) \leq \langle \xi_j, x_j^t - z_i \rangle ~~ \forall i \in [n] \; .
	\end{aligned}
\end{equation}
If the above problem is not feasible, we sample a stochastic,  isotropic Gaussian vector with a suitable scaling and update $x_j^t$ along this exploration direction. 

\begin{definition}[Minimal Selection or Exploration (MSoE) Algorithm]
	\label{def:MSoE}
	After round $t \in \N$, let $S^t$ be a subset of $[m]$ such that $j \in S^t$ if there is $\xi_j \in \R^d$ satisfying $V_t(x_j) - V_t(z_i) \le \langle \xi_j, x_j^t - z_i \rangle$ for all $i \in [n]$. For $j \in S^t$, the player solves \eqref{eq:feasible} and updates $x_j^{t + 1} = x_j^t - \eta \xi_j^t$ using the unique solution $\xi_j^t$ to \eqref{eq:feasible}. For $j \notin S^t$, the player samples a Gaussian vector $g_j^t \sim N(0, I_d)$ and updates
	\begin{equation*}
		x_j^{t+1} = x_j^t - \sqrt{\eta \frac{\max_{i\in [n]} [V_t(x_j^t) - V_t(z_i)]_+}{d}}  g_j^t \; .
	\end{equation*}
	We assume all the Gaussian vectors are independent of each other. 
\end{definition}

In short, we modify the minimal selection algorithm to let the feasible decision points move along minimal selection directions as before, while infeasible decision points fully explore in a random direction. The fact that the discretization stepsize scales with $\sqrt{\eta}$ is motivated by Euler discretization of Langevin dynamics. The adaptive stepsize factor $\max_{i\in [n]} [V_t(x_j^t) - V_t(z_i)]_+$ follows the relative potential difference between infeasible decision points and grid points. For the MSoE algorithm, an expected regret bound can be derived that depends on the fraction of infeasible points; see Appendix \ref{sec:prop:rand} for the proof.

\begin{theorem}\label{prop:rand}
	Assume $\Omega = \R^d$. Let $\mu_t$, $\xi_j^t$, and $g_j^t$ be the output of the MSoE algorithm in Definition~\ref{def:MSoE}. For any $V_t$ and any measure $\nu \in \cP_2(\R^d)$ supported on $Z$,
	\begin{equation}
		\label{eq:bound-rand-total}
		\begin{aligned}
			\E\left[\sum_{t=1}^T \cV_t(\mu_t) - \sum_{t=1}^T \cV_t(\nu) \right] 
			& \leq \frac{W^2_2(\mu_1, \nu)}{2 \eta} + \frac{\eta}{2} \sum_{t = 1}^T \E\left[\frac{1}{m} \sum_{j \in S^t} \|\xi_j^t\|^2\right] \\
			 & \quad + \frac{3}{2} \sum_{t = 1}^{T} \E\left[\frac{1}{m} \sum_{j \notin S^t} \max_{i\in [n]} [V_t(x_j^t) - V_t(z_i)]_+\right] \; .
		\end{aligned}
	\end{equation}
\end{theorem}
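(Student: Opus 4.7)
My plan is to mimic the proof of the EVI in Lemma \ref{prop:EVI}, but to handle the feasible indices $j\in S^t$ and the exploring indices $j\notin S^t$ separately and to take conditional expectations over the Gaussian exploration directions. Throughout, I condition on the filtration $\mathcal{F}_t$ generated by all random draws up through round $t-1$, so that $\mu_t$, $V_t$, $S^t$, and the $\xi_j^t$ are $\mathcal{F}_t$-measurable while the $g_j^t$ are independent of $\mathcal{F}_t$.

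The first step is to fix an optimal coupling $\gamma^t\in\Pi_o(\mu_t,\nu)$; since both measures are discrete I write $\gamma^t=\sum_{i,j}p_{ij}^t\,\delta_{(x_j^t,z_i)}$ with $\sum_i p_{ij}^t=1/m$ and $\sum_j p_{ij}^t=\nu_i$. For each pair $(i,j)$ I expand $\|x_j^{t+1}-z_i\|^2$. When $j\in S^t$, substituting $x_j^{t+1}=x_j^t-\eta\xi_j^t$ and applying the feasibility constraint \eqref{eq:obj2} gives exactly the EVI bound
\begin{equation*}
V_t(x_j^t)-V_t(z_i)\;\le\;\frac{\|x_j^t-z_i\|^2-\|x_j^{t+1}-z_i\|^2}{2\eta}+\frac{\eta}{2}\|\xi_j^t\|^2,
\end{equation*}
reproducing the calculation in Lemma \ref{prop:EVI}.

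For $j\notin S^t$ I substitute $x_j^{t+1}=x_j^t-\alpha_j^t g_j^t$ with $\alpha_j^t=\sqrt{\eta M_j^t/d}$ and $M_j^t:=\max_{i\in[n]}[V_t(x_j^t)-V_t(z_i)]_+$. Taking conditional expectation over $g_j^t\sim N(0,I_d)$ makes the cross term vanish and uses $\E\|g_j^t\|^2=d$, yielding $\E[\|x_j^{t+1}-z_i\|^2\mid\mathcal{F}_t]=\|x_j^t-z_i\|^2+\eta M_j^t$. Combining this identity with the trivial bound $V_t(x_j^t)-V_t(z_i)\le M_j^t$ gives
\begin{equation*}
V_t(x_j^t)-V_t(z_i)\;\le\;\frac{\|x_j^t-z_i\|^2-\E[\|x_j^{t+1}-z_i\|^2\mid\mathcal{F}_t]}{2\eta}+\frac{3}{2}M_j^t,
\end{equation*}
where the constant $3/2$ is forced by the arithmetic $-\tfrac{1}{2}M_j^t+\tfrac{3}{2}M_j^t=M_j^t$. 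I then average both per-index inequalities against the weights $p_{ij}^t$. The left side gives $\cV_t(\mu_t)-\cV_t(\nu)$ by the coupling identity $\sum_{i,j}p_{ij}^t(V_t(x_j^t)-V_t(z_i))=\int V_t\,d\mu_t-\int V_t\,d\nu$. On the right, optimality of $\gamma^t$ at time $t$ yields $\sum p_{ij}^t\|x_j^t-z_i\|^2=W_2^2(\mu_t,\nu)$, while for each realization of $g^t$ the weights $p_{ij}^t$ still constitute a (generally suboptimal) coupling of $\mu_{t+1}$ and $\nu$, so $\sum p_{ij}^t\|x_j^{t+1}-z_i\|^2\ge W_2^2(\mu_{t+1},\nu)$ pointwise in the randomness. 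Taking conditional expectations preserves this inequality.

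The final step is to take total expectation, sum over $t=1,\ldots,T$, and telescope the $W_2^2$ terms; the nonnegativity $W_2^2(\mu_{T+1},\nu)\ge 0$ then produces $W_2^2(\mu_1,\nu)$ on the right-hand side. Rearranging matches \eqref{eq:bound-rand-total} exactly. The main obstacle is the design of the exploration step in the second paragraph: the scaling $\alpha_j^t=\sqrt{\eta M_j^t/d}$ is chosen precisely so that the mean-squared displacement equals $\eta M_j^t$, matching the potential gap at the right order to yield the clean constant $3/2$; a different scaling would either leave a residual $\eta$-dependent factor or break the telescoping. A secondary subtlety is the bookkeeping: the coupling domination $W_2^2(\mu_{t+1},\nu)\le\sum p_{ij}^t\|x_j^{t+1}-z_i\|^2$ must be applied \emph{before} conditional expectation so that $p_{ij}^t$, which is $\mathcal{F}_t$-measurable, acts as a deterministic coupling against the random atoms $x_j^{t+1}$.
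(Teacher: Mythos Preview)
Your proposal is correct and follows essentially the same approach as the paper: choose an optimal coupling of $\mu_t$ and $\nu$, expand $\|x_j^{t+1}-z_i\|^2$, use the feasibility constraint for $j\in S^t$ and the zero mean and second moment of the Gaussian for $j\notin S^t$, then pass to $W_2^2(\mu_{t+1},\nu)$ via the suboptimal coupling and telescope after taking expectations. The only cosmetic difference is that you organize the computation per pair $(i,j)$ before averaging, whereas the paper first bounds $W_2^2(\mu_{t+1},\nu)-W_2^2(\mu_t,\nu)$ in aggregate and then splits the sum over $j\in S^t$ versus $j\notin S^t$; the underlying inequalities are identical.
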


From \eqref{eq:bound-rand-total}, we can expect a moderate regret bound provided that the proportion $\frac{|[m] \backslash S^t|}{m}$ of infeasible decision points is small. We prove that this is indeed the case under some assumptions on the sequence of functions $V_t$. To see this, we first define a feasible region.

\begin{definition} 
	For $t \in \N$, let	
	\begin{equation*}
		R^t := \left\{x \in \Omega : \exists \xi ~~ \text{such that} ~~ V_t(x) - V_t(z_i) \le \langle \xi, x - z_i \rangle ~~ \forall i \in [n]\right\} \; 
	\end{equation*}
	and for each $x \in R^t$ define 
	\begin{equation*}
		\xi_x = \argmin_{\xi \in \R^d} \|\xi\|^2 \quad \text{subject to} ~~ V_t(x) - V_t(z_i) \le \langle \xi, x - z_i \rangle ~~ \forall i \in [n] \; .
	\end{equation*}
	Lastly, for $\eta > 0$ let $R^t_{\eta} := \{x - \eta \xi_x : x \in R^t\}$.
\end{definition}

The set $R^t$ contains the points for which the minimal selection algorithm is feasible, hence $j \in S^t$ if and only if $x_j^t \in R^t$. Also, $\xi_x$ is well-defined for $x \in R^t$ as discussed in Remark \ref{rmk:unique}. Lastly, $R^t_{\eta}$ denotes the region obtained by updating points in $R^t$ according to the minimal selection algorithm. Now, we introduce an assumption to control the proportion of infeasible decision points. 

\begin{assumption}[Non-expansion condition]
	\label{asmp:non-expansion}
	There exists $\eta > 0$ so that $R^t_\eta \subseteq R^{t+1}$ for all $t \in \N$.
\end{assumption}

In other words, this assumption guarantees that any feasible point at some round is still feasible after the update; $j \in S^t$ implies $j \in S^{t + 1}$. Hence, the proportion $\frac{|[m] \backslash S^t|}{m}$ of infeasible points does not grow. We impose a further assumption that guarantees this proportion decreases.
\begin{assumption}[Shrinking condition]
	\label{asmp:shrinking}
	There exist $\eta > 0$ and $\gamma \in (0, 1)$ such that for all $t \in \N$,
	\begin{align}
		\inf_{x \in \Omega \backslash R^t} \Pr_{g \sim N(0, I_d)}\left(x - \sqrt{\eta \tfrac{\max_{i \in [n]} [V_t(x) - V_t(z_i)]_+}{d}} g \in R^{t+1} \right) \geq \gamma \; .
	\end{align}
\end{assumption}

In short, each infeasible point can be updated to a feasible point by a random direction with probability at least $\gamma$.\footnote{Assumptions \ref{asmp:non-expansion} and \ref{asmp:shrinking} are about relations between two adjacent functions $V_t$ and $V_{t + 1}$. We present a simple example of $V_t$ satisfying these assumptions in Appendix \ref{sec:assumptions}.} Combining this assumption with Assumption \ref{asmp:non-expansion}, we can show that the proportion $\frac{|[m] \backslash S^t|}{m}$ of infeasible decision points shrinks by a factor $1 - \gamma$ in each iteration. This observation yields the following regret bound. For simplicity, we assume each $V_t$ is uniformly bounded; see Appendix \ref{sec:prop:shrinking} for the proof.

\begin{theorem}[Shrinking fraction of infeasible decision points]
	\label{prop:shrinking}
	Assume $\Omega = \R^d$. Let $\mu_t$, $\xi_j^t$, and $g_j^t$ be the output of the MSoE algorithm. If Assumptions \ref{asmp:non-expansion} and \ref{asmp:shrinking} are satisfied with parameters $\eta, \gamma>0$, then for any measure $\nu \in \cP_2(\R^d)$ supported on $Z$,
	\begin{equation}
		\label{eq:shrinking-bound}
		\E\left[\sum_{t=1}^T \cV_t(\mu_t) - \sum_{t=1}^T \cV_t(\nu) \right] \leq \frac{W^2_2(\mu_1, \nu)}{2 \eta} + \frac{\eta}{2} \sum_{t = 1}^T \E\left[\frac{1}{m} \sum_{j \in S^t} \|\xi_j^t\|^2\right] + 3 B \gamma^{-1} \frac{|[m]\backslash S^1|}{m} \; ,
	\end{equation}
	where $B = \max_{t \in [T]} \sup_{x \in \Omega} |V_t(x)|$.
\end{theorem}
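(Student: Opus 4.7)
The plan is to use Theorem \ref{prop:rand} as the starting point: its bound \eqref{eq:bound-rand-total} already contains the first two terms we want, so the only new work is to control the third term
\[
\frac{3}{2} \sum_{t=1}^{T} \E\left[\frac{1}{m}\sum_{j\notin S^t} \max_{i\in[n]}[V_t(x_j^t)-V_t(z_i)]_+\right]
\]
and show it is at most $3B\gamma^{-1} |[m]\setminus S^1|/m$. Since each $V_t$ is bounded by $B$ in absolute value, the inner $\max_{i\in[n]}[V_t(x_j^t)-V_t(z_i)]_+$ is uniformly bounded by $2B$, so the task reduces to bounding $\sum_{t=1}^T \E[|[m]\setminus S^t|/m]$ by $\gamma^{-1}|[m]\setminus S^1|/m$.

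Next I would prove that $\E[|[m]\setminus S^{t+1}|] \leq (1-\gamma)\,\E[|[m]\setminus S^t|]$ for every $t$, treating the two types of indices separately. For $j\in S^t$ we have $x_j^t\in R^t$, so the minimal-selection update gives $x_j^{t+1}=x_j^t-\eta\xi_j^t \in R^t_\eta$; by Assumption \ref{asmp:non-expansion} this set lies inside $R^{t+1}$, hence $j\in S^{t+1}$ deterministically. Thus the infeasible set at round $t+1$ can only be populated by indices already in $[m]\setminus S^t$. For $j\notin S^t$, the MSoE algorithm updates $x_j^{t+1}$ using an isotropic Gaussian step with the prescribed variance, and Assumption \ref{asmp:shrinking} guarantees $\Pr(x_j^{t+1}\in R^{t+1} \mid x_j^t) \geq \gamma$ uniformly. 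Taking expectations over the independent Gaussians and summing over $j\notin S^t$, conditional on the filtration up to round $t$, yields the one-step contraction.

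Iterating this contraction gives $\E[|[m]\setminus S^t|] \leq (1-\gamma)^{t-1}\,|[m]\setminus S^1|$, so
\[
\sum_{t=1}^T \E\left[\frac{|[m]\setminus S^t|}{m}\right] \leq \frac{|[m]\setminus S^1|}{m}\sum_{t=1}^{\infty}(1-\gamma)^{t-1} = \gamma^{-1}\frac{|[m]\setminus S^1|}{m}.
\]
Combining this with the uniform bound $\max_{i\in[n]}[V_t(x_j^t)-V_t(z_i)]_+ \leq 2B$ shows the third term of \eqref{eq:bound-rand-total} is at most $3B\gamma^{-1}|[m]\setminus S^1|/m$, which plugged back into Theorem \ref{prop:rand} yields \eqref{eq:shrinking-bound}.

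The main subtlety is the one-step contraction. One must be careful that the event $\{j\in S^{t+1}\}$ depends only on $x_j^{t+1}$ (through the deterministic set $R^{t+1}$), that the Gaussian step sizes $\sqrt{\eta\max_i[V_t(x_j^t)-V_t(z_i)]_+/d}$ are exactly the ones for which Assumption \ref{asmp:shrinking} supplies the uniform lower bound $\gamma$, and that the Gaussian vectors $g_j^t$ are independent across $j$ and of the history so that conditional probabilities factorize cleanly. Once this is set up, the remainder is a geometric-series calculation and a substitution into \eqref{eq:bound-rand-total}.
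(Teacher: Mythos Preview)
Your proposal is correct and follows essentially the same approach as the paper: start from the bound of Theorem~\ref{prop:rand}, use $|V_t|\le B$ to replace $\max_{i}[V_t(x_j^t)-V_t(z_i)]_+$ by $2B$, and then establish the one-step contraction $\E[\mathbf{1}_{j\notin S^{t}}\mid \cF_{t-1}]\le(1-\gamma)\mathbf{1}_{j\notin S^{t-1}}$ via Assumptions~\ref{asmp:non-expansion} and~\ref{asmp:shrinking} before summing the resulting geometric series. The only cosmetic difference is that the paper works index-by-index with conditional expectations on the filtration before summing over $j$, while you phrase the contraction directly for the aggregate count $|[m]\setminus S^t|$; the content is identical.
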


\subsection{Further Extensions}
\label{sec:extensions}

We briefly discuss a few extensions of our learning problem and the minimal selection algorithm.

\paragraph{Relaxed minimal selection algorithm}	We introduce another modification of the minimal selection algorithm using slack variables. After round $t \in \N$, for each $j \in [m]$, the player solves 
\begin{align}
	\min_{\xi_j \in \R^d, s_j \ge 0} \quad & \|\xi_j\|^2 + \frac{2}{\eta} s_j \\
	\text{subject to} \quad & -s_j + V_t(x_j^t) - V_t(z_i) \leq \langle \xi_j, x_j^t - z_i \rangle ~~ \forall i \in [n] \; . \label{eq:constraint-slack}
\end{align}
After finding minimizers $(\xi_j^t, s_j^t)$, the player updates $x_j^{t + 1} = x_j^{t} - \eta \xi_j^t$, where $\eta > 0$ is a suitable stepsize. Now that variables are $\xi_j$ and $s_j$, the constraint \eqref{eq:constraint-slack} is always feasible. The resulting optimization problem is still convex and admits a unique solution $(\xi_j^t, s_j^t)$. Also, for $j \in S^t$, that is, for a decision point for which the minimal selection algorithm is feasible, one can easily verify that the relaxed minimal selection algorithm boils down to the minimal selection algorithm. Moreover, we can obtain regret bounds similar to \eqref{eq:convex-upper-tele} or \eqref{eq:bound-rand-total}; see \eqref{eq:bound-slack} and \eqref{eq:slack-bound-rand} in Appendix \ref{sec:appendix-extensions}.

\paragraph{Interaction functional} We consider an extension of our problem to a different class of loss functional. Suppose we replace the potential functional with the interaction functional. Nature reveals a lower semi-continuous function $W^t \colon \R^d \to [0, \infty)$ only on $\{x_j^t\}_{j \in [m]} \cup Z$, that is, the player knows $W^t(x_j^t - z_i)$ for all $(i, j) \in [n] \times [m]$. The player suffers a loss given as the interaction functional associated with $W^t$ (Example \ref{def:interaction}):
\begin{equation}
	\label{eq:interaction-loss}
	\cW_t(\mu_t) 
	= \int_{\R^d} \int_{\R^d} W^t(x - y) \dd{\mu_t(x)} \dd{\mu_t(y)} 
	= \frac{1}{m^2} \sum_{j, k = 1}^m W^t(x_j^t - x_{k}^t) \; .
\end{equation}
In this case, we define the minimal selection algorithm as follows: after finishing round $t \in \N$, the player solves
\begin{align}
	\min_{\xi_1, \ldots, \xi_m \in \R^d} \quad & \frac{1}{m} \sum_{j = 1}^m \|\xi_j\|^2 \\
	\text{subject to} \quad & \frac{1}{m} \sum_{k = 1}^{m} W^t(x_j^t - x_k^t) - \min_{k \in [n]} W^t(z_i - z_k)  \leq \langle \xi_j, x_j^t - z_i \rangle ~~ \forall (i, j) \in [n] \times [m] \; . \label{eq:constraint-interaction}
\end{align}
Again, we have a convex program that admits a unique solution $(\xi_1^t, \ldots, \xi_m^t)$ provided the constraint \eqref{eq:constraint-interaction} is feasible. Using the EVI, we can derive a regret bound similar to \eqref{eq:convex-upper-tele}; see Theorem \ref{prop:interaction} in Appendix \ref{sec:appendix-extensions}.

\paragraph{Projection} We briefly mention a projection step to ensure $x_j^t \in \Omega$. The idea is to change the update rule from $x_j^{t + 1} = x_j^t - \eta \xi_j^t$ to $x_j^{t + 1} = P_{\Omega}(x_j^t - \eta \xi_j^t)$, where $P_{\Omega} \colon \R^d \to \Omega$ is defined as
\begin{equation*}
	P_{\Omega}(x) = \argmin_{\omega \in \Omega} \|x - \omega\| \; .
\end{equation*}
Recall that the projection $P_{\Omega}$ is well-defined provided $\Omega$ is closed and convex. Using this projection step, we can always ensure decision points are contained in $\Omega$. Moreover, this modification does not affect the regret bound \eqref{eq:convex-upper-tele}; see Theorem \ref{prop:projection} in Appendix \ref{sec:appendix-extensions}.

\section{Discussion}\label{sec:discussion}
In this paper, we have introduced and studied an infinite-dimensional online learning problem called the Online Learning to Transport (OLT) problem, where the decision variable is a probability measure on a fully nonparametric space, the Wasserstein space. Leveraging tools from optimal transport theory, we have established several theoretical results regarding the OLT problem; equipping the decision space with the Wasserstein distance, we have utilized the evolution variational inequality and the minimal selection principle to derive $\sqrt{T}$-regret bound for the OLT problem. Also, we have studied a practical framework for the OLT problem using zeroth-order information, where we develop a concrete algorithm called the Minimal Selection or Exploration (MSoE) that is numerically tractable and works beyond the convex setting.

Lastly, we mention a few directions for future research. First, theoretical results in Section \ref{sec:regret-bound-OLT} may be improved with a stronger notion of convexity. For instance, one might study if $\log(T)$-regret is achievable based on $\lambda$-convexity (Definition 9.1.1 of \cite{Ambrosio_2005}) of $\cE_t$. Another important direction is to modify the minimal selection strategy with adaptive stepsize $\eta_t$. In our work, $\sqrt{T}$-regret is achievable for a fixed stepsize depending on some universal constants as mentioned Remark \ref{rmk:regret}. Hence, to remedy this limitation, we might need to design an adaptive stepsize $\eta_t$ possibly in terms of $\boldsymbol{\xi}_{1}, \ldots, \boldsymbol{\xi}_{t - 1}$ (or their norms). Meanwhile, on the practical side, it would be interesting to see how the zeroth-order information framework and the MSoE algorithm work in real-life examples; for instance, we may compare our method with existing methods for real driver allocation datasets, which will shed light on developing even more practical framework and algorithms.

\section*{Acknowledgement}

Liang acknowledges the generous support from the NSF Career Award (DMS-2042473), and the William S. Fishman Faculty Research Fund at the University of Chicago Booth School of Business.

%%%%%%%%%%%%%%%%%%%%%%%%%%%%%%%%%%%%%%%%%%%%%%
%% Reference                                %%
%%%%%%%%%%%%%%%%%%%%%%%%%%%%%%%%%%%%%%%%%%%%%%
\bibliography{ref.bib}
\bibliographystyle{plainnat}
\nocite{*}

\appendix

\section{Proofs of the Regret Bounds in Section \ref{sec:zeroth}}
\subsection{Proof of Theorem \ref{prop:cvx}}
\label{sec:prop:cvx}
By convexity of $V_t$, the constraint set \eqref{eq:obj2} is always feasible, hence $\xi_j^t$ is well-defined. Let $\nu = \sum_{i} a_i \delta_{z_i}$ for some $a_1, \ldots, a_n \ge 0$ such that $\sum_{i = 1}^{n} a_i = 1$. We can find an optimal coupling between $\mu_t$ and $\nu$, which takes the following form: $\sum_{i = 1}^{n} \sum_{j = 1}^{m} \pi_{j i} \delta_{(x_j^t, z_i)}$, where $\sum_{i = 1}^{n} \pi_{j i} = \frac{1}{m}$ and $\sum_{j = 1}^{m} \pi_{j i} = a_i$. Note that $\sum_{i = 1}^{n} \sum_{j = 1}^{m} \pi_{j i} \delta_{(x_j^{t + 1}, z_i)}$ is a coupling between $\nu$ and $\mu_{t + 1}$ as well. Hence, 
\begin{align*}
	W_2^2(\mu_{t + 1}, \nu) - W_2^2(\mu_t, \nu) 
	& \le \sum_{i = 1}^{n} \sum_{j = 1}^{m} \|x_j^{t + 1} - z_i\|^2 \pi_{j i} - \sum_{i = 1}^{n} \sum_{j = 1}^{m} \|x_j^t - z_i\|^2 \pi_{j i} \\
	& = \sum_{i = 1}^{n} \sum_{j = 1}^{m} \left(\eta^2 \|\xi_j^t\|^2 - 2 \eta \langle \xi_j^t, x_j^t - z_i \rangle\right) \pi_{j i} \\
	& \le \sum_{i = 1}^{n} \sum_{j = 1}^{m} \left(\eta^2 \|\xi_j^t\|^2 - 2 \eta (V_t(x_j^t) - V_t(z_i))\right) \pi_{j i} \\
	& = \eta^2 \left(\frac{1}{m} \sum_{j = 1}^{m} \|\xi_j^t\|^2\right) - 2 \eta \left(\cV_t(\mu_t) - \cV_t(\nu)\right) \; . 
\end{align*}
Therefore, we have 
\begin{equation}
	\label{eq:convex-upper}
	\cV_t(\mu_t) - \cV_t(\nu) 
	\le 
	\frac{W_2^2(\mu_t, \nu) - W_2^2(\mu_{t + 1}, \nu)}{2 \eta} + \frac{\eta}{2} \cdot \frac{1}{m} \sum_{j = 1}^m \|\xi_j^t\|^2 \; ,
\end{equation}
and \eqref{eq:convex-upper-tele} follows by summing \eqref{eq:convex-upper} iteratively.

\subsection{Proof of Theorem \ref{prop:rand}}\
\label{sec:prop:rand}
We first prove the following:
\begin{equation}
	\label{eq:bound-rand}
	\begin{aligned}
		\E \left[\cV_t(\mu_t) - \cV_t(\nu) ~ | ~ \cF_t \right]
		& \leq \frac{\E \big[W_2^2(\mu_t, \nu) - W_2^2(\mu_{t + 1}, \nu) ~ | ~ \cF_t\big]}{2 \eta} \\
		& \quad + \frac{\eta}{2} \cdot \frac{1}{m} \sum_{j\in S^t} \|\xi_j^t\|^2 + \frac{3}{2} \cdot \frac{1}{m} \sum_{j \notin S^t} \max_{i\in [n]} [V_t(x_j^t) - V_t(z_i)]_+ \; , 
	\end{aligned}
\end{equation}
where $\cF_t$ denotes the $\sigma$-field generated by $\{g_j^s : \forall s < t ~~ \text{and} ~~ \forall j \in S^s\}$ for $t > 1$ and $\cF_1$ is the trivial $\sigma$-field. We write $x_j^{t + 1} = x_j^{t} - v_j^t$, where 
\begin{equation*}
	v^t_j = \sqrt{\eta \frac{\max_{i\in [n]} [V_t(x_j^t) - V_t(z_i)]_+}{d}} g_j^t \quad \text{if} ~~ j \notin S^t
\end{equation*}	
and $v^t_j = \eta \xi_j^t$ otherwise. As in the proof of Theorem \ref{prop:cvx}, let $\sum_{i = 1}^{n} \sum_{j = 1}^{m} \pi_{j i} \delta_{(x_j^t, z_i)}$ be an optimal coupling between $\mu_t$ and $\nu = \sum_{i} a_i \delta_{z_i}$. Again, it is a coupling between $\nu$ and $\mu_{t + 1}$, hence
\begin{align*}
	W_2^2(\mu_{t + 1}, \nu) - W_2^2(\mu_t, \nu) 
	& \le \sum_{i = 1}^{n} \sum_{j = 1}^{m} \|x_j^{t + 1} - z_i\|^2 \pi_{j i} - \sum_{i = 1}^{n} \sum_{j = 1}^{m} \|x_j^t - z_i\|^2 \pi_{j i} \\
	& = \sum_{i = 1}^{n} \sum_{j = 1}^{m} \left(\|v_j^t\|^2 - 2 \langle v_j^t, x_j^t - z_i \rangle\right) \pi_{j i} \\
	& = \frac{1}{m} \sum_{j = 1}^{m} \|v_j^t\|^2 - 2 \sum_{i = 1}^{n} \sum_{j = 1}^{m} \langle v_j^t, x_j^t - z_i \rangle \pi_{j i} \; . 
\end{align*}
Now, taking the conditional expectation $\E[\cdot ~ | ~ \cF_t]$, we have
\begin{equation*}
	\E [W_2^2(\mu_{t + 1}, \nu) - W_2^2(\mu_t, \nu) ~ | ~ \cF_t]
	\le \frac{1}{m} \sum_{j = 1}^{m} \E \left[ \|v_j^t\|^2 ~ | ~ \cF_t\right]- 2 \sum_{i = 1}^{n} \sum_{j = 1}^{m} \E \left[\langle v_j^t, x_j^t - z_i \rangle \pi_{j i} ~ | ~ \cF_t\right] \; . 
\end{equation*}
Now, we upper bound the last two terms.	For $j \in S^t$, we have $\E[ \|v_j^t\|^2 ~ | ~ \cF_t] = \eta^2 \|\xi_j^t\|^2$ because $v_j^t = \xi_j^t$ is measurable with respect to $\cF_t$. Meanwhile, for $j \notin S^t$, since 
$\E \|g_j^t\|^2 = d$, 
\begin{equation*}
	\E \big[\|v_j^t\|^2 ~ | ~ \cF_t\big] = \eta \max_{i\in [n]} [V_t(x_j^t) - V_t(z_i)]_+ \; .
\end{equation*}
Hence, 
\begin{equation}
	\label{eq:temp}
	\frac{1}{m} \sum_{j = 1}^{m} \E \left[\|v_j^t\|^2 ~ | ~ \cF_t\right]
	= \frac{\eta^2}{m} \sum_{j \in S^t} \|\xi_j^t\|^2 + \frac{\eta}{m} \sum_{j \notin S^t} \max_{i\in [n]} [V_t(x_j^t) - V_t(z_i)]_+ \; . 
\end{equation}
Next, for $j \in S^t$, recall that
\begin{equation*}
	\langle v^t_j, x_j^t - z_i \rangle \geq \eta \big( V_t(x_j^t)  - V_t(z_i) \big) \; .
\end{equation*}
For $j \notin S^t$, since $\E g_j^t = 0$, 
\begin{equation*}
	\E \left[\langle v^t_j, x_j^t - z_i \rangle ~ | ~ \cF_t \right]  = 0 \ge \eta \big( V_t(x_j^t) - V_t(z_i) \big) - \eta \max_{i\in [n]} [V_t(x_j^t) - V_t(z_i)]_+ \; .
\end{equation*}
Hence, 
\begin{align*}
	& - 2 \sum_{i = 1}^{n} \sum_{j = 1}^{m} \E \left[\langle v_j^t, x_j^t - z_i \rangle \pi_{j i} ~ | ~ \cF_t\right] \\
	& \le - 2 \eta \sum_{i = 1}^{n} \sum_{j \in S^t} \big( V_t(x_j^t)  - V_t(z_i) \big) \pi_{j i} \\
	& \quad - 2 \eta \sum_{i = 1}^{n} \sum_{j \notin S^t} \big( V_t(x_j^t)  - V_t(z_i) \big) \pi_{j i} + 2 \eta \sum_{i = 1}^{n} \sum_{j \notin S^t} \max_{i\in [n]} [V_t(x_j^t) - V_t(z_i)]_+ \pi_{j i} \\
	& = - 2 \eta \sum_{i = 1}^{n} \sum_{j = 1}^{m} \big( V_t(x_j^t)  - V_t(z_i) \big) \pi_{j i} + \frac{2 \eta}{m} \sum_{j \notin S^t} \max_{i\in [n]} [V_t(x_j^t) - V_t(z_i)]_+ \\
	& = - 2 \eta \left(\cV_t(\mu_t) - \cV_t(\nu)\right) + \frac{2 \eta}{m} \sum_{j \notin S^t} \max_{i\in [n]} [V_t(x_j^t) - V_t(z_i)]_+ \; .
\end{align*}
Combining this result with \eqref{eq:temp} and using $\E \left[\cV_t(\mu_t) - \cV_t(\nu) ~ | ~ \cF_t \right] = \cV_t(\mu_t) - \cV_t(\nu)$, we obtain \eqref{eq:bound-rand}. Now, taking the expectation to the both sides of \eqref{eq:bound-rand} and sum iteratively to obtain \eqref{eq:bound-rand-total}.

\subsection{Proof of Theorem \ref{prop:shrinking}}
\label{sec:prop:shrinking}
First, using uniform boundedness of $V_t$, we have
\begin{equation*}
	\sum_{j \notin S^t} \E\left[\max_{i\in [n]} [V_t(x_j^t) - V_t(z_i)]_+\right] \le 2 B \sum_{j = 1}^{m} 1_{j \notin S^{t}} \; .
\end{equation*}
Combining this with \eqref{eq:bound-rand} and taking the expectation, we have 
\begin{equation*}
	\E \left[\cV_t(\mu_t) - \cV_t(\nu)\right]
	\leq \frac{\E \big[W_2^2(\mu_t, \nu) - W_2^2(\mu_{t + 1}, \nu)\big]}{2 \eta} + \frac{\eta}{2} \E\left[\frac{1}{m} \sum_{j\in S^t} \|\xi_j^t\|^2\right] + \frac{3 B}{m} \sum_{j = 1}^{m} \E[1_{j \notin S^{t}}] \; .
\end{equation*}
Summing this over $t \in [T]$, we have
\begin{equation}
	\label{eq:shrink-temp}
	\E\left[\sum_{t=1}^T \cV_t(\mu_t) - \sum_{t = 1}^T \cV_t(\nu) \right] \leq \frac{W^2_2(\mu_1, \nu)}{2 \eta} + \frac{\eta}{2} \sum_{t = 1}^T \E\left[\frac{1}{m} \sum_{j \in S^t} \|\xi_j^t\|^2\right] + \frac{3 B}{m} \sum_{t = 1}^{T} \sum_{j = 1}^{m} \E[1_{j \notin S^{t}}] \; .
\end{equation}
Now, we upper bound the last term on the right-hand side. Since $j \notin S^t$ implies $j \notin S^{t-1}$, we have
\begin{align*}
	& \E\left[ 1_{j \notin S^t} ~|~ \cF_{t-1}\right] \\
	& =  \E\left[ 1_{j \notin S^{t-1} ~\text{and}~ j \notin S^{t}} ~|~ \cF_{t-1} \right] \\
	& = \E\left[ 1_{ x_j^{t} \notin R^{t} } ~|~  \cF_{t-1} \right] 1_{x_j^{t-1} \in \Omega \backslash R^{t-1}} \\
	& = \Pr_{g\sim N(0, I)}\left( x_j^{t-1} - \sqrt{\eta \tfrac{\max_{i\in [n]} [V_{t - 1}(x) - V_{t-1}(z_i)]_+}{d}} g \notin R^{t} \right)  1_{x_j^{t-1} \in \Omega \backslash R^{t-1}} \\
	& \leq (1-\gamma ) 1_{x_j^{t-1} \in \Omega \backslash R^{t-1}} \\
	& = (1-\gamma ) 1_{j \notin S^{t-1}} \; ,
\end{align*}
where the inequality is due to Assumption \ref{asmp:shrinking}. Therefore, taking the conditional expectation recursively according to the filtration, we have $\E[1_{j \notin S^{t}}] \le (1 - \gamma)^{t - 1} 1_{j \notin S^1}$. Hence,  
\begin{equation*}
	\sum_{j = 1}^{m} \sum_{t = 1}^{T} \E[1_{j \notin S^{t}}] \le (1 + (1 - \gamma) + \cdots + (1 - \gamma)^{T - 1}) \sum_{j = 1}^{m} 1_{j \notin S^1} \le \gamma^{-1} |[m] \backslash S^1| \; .
\end{equation*}
Combining this with \eqref{eq:shrink-temp}, we obtain \eqref{eq:shrinking-bound}.

\section{Supplementary Explanations}
\subsection{Proof of Theorem \ref{thm:continuous-regret}}
\label{sec:continuous-time}
The proof uses differentiability of Wasserstein distance (Theorem 8.13 of \cite{Villani_2003}). If $\boldsymbol{\xi}_t(x)$ is a $C^1$ function of $x$ and $t$ that is globally bounded, for any $\mu \in \cP_2(\R^d)$, one can calculate that
\begin{equation*}
	\frac{\mathrm{d} W_{2}^{2}(\mu, \mu_{t})}{\mathrm{d} t}\Bigg|_{t=s} 
	= 2 \int\left\langle x-\bt_{\mu_s}^\mu(x), \boldsymbol{\xi}_s(x)\right\rangle \dd{\mu_s(x)} 
	\leq 2 [\cE_s(\mu) - \cE_s(\mu_s)] \; . 
\end{equation*}
The inequality is due to the characterization of Fr\'{e}chet subdifferential in Lemma \ref{def:minimal-selection}. Integrating over $s \in [0, T]$, we obtain the regret bound.

\subsection{Assumptions \ref{asmp:non-expansion} and \ref{asmp:shrinking}}
\label{sec:assumptions}
First, we illustrate the notions of feasible and infeasible regions for some loss functions in one dimension. In Figure\ref{fig:non-cvx}, we plot a w-shape loss function $V$. For simplicity, we set two global minimizers $z_1, z_2$ as grid points, and $x_1, x_2$ as player's decision points. One can see that $x_1$ is feasible where the minimal selection subdifferential $\xi_1$ is the slope of the blue line $l_1$, passing through $(x_1, V(x_1))$, $(z_1, V(z_1))$. The decision point $x_2$, lying in the barrier between two global minimizers, is infeasible: for any line passing through $(x_2, V(x_2))$, it is impossible to have smaller values than the loss function at both grid points $z_1, z_2$. The red lines in the figure are two attempts. 
With this intuition, one can verify that the feasible region is $(-\infty, z_1]\cup [z_2, +\infty)$, whereas the infeasible region is $(z_1, z_2)$.

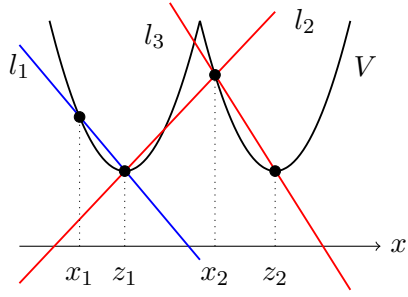
\begin{figure}[ht]
\begin{center}
\begin{tikzpicture}[domain=0:2, scale = 2.0]
\draw[black, line width = 0.25mm]   plot[smooth,domain=0:1] (\x, {4*(\x-0.5)*(\x-0.5)+0.5});
\draw[black, line width = 0.25mm]   plot[smooth,domain=-1:0] (\x, {4*(\x+0.5)*(\x+0.5)+0.5});
\draw[->] (-1.2,0) -- (1.2,0) node[right] {$x$};
% plot supporting lines
\draw[blue, line width = 0.25mm] (-1.2,1.339) -- (0,-0.09);
\node at (-1.2, 1.2) {$l_1$};
\draw[red, line width = 0.25mm] (-1.2, -0.246) -- (0.5, 1.56);
\node at (0.7, 1.5) {$l_2$};
\draw[red, line width = 0.25mm] (-0.2, 1.619) -- (1, -0.29);
\node at (-0.3, 1.4) {$l_3$};
\node at (1.1, 1.2) {$V$};
% plot grid and decision points
\filldraw[black] (-0.8,0.86) circle (1pt) node[anchor=west]{};
\draw[dotted] (-0.8,0) -- (-0.8,0.86);
\node at (-0.8, -0.2) {$x_1$};
\filldraw[black] (0.1,1.14) circle (1pt) node[anchor=west]{};
\draw[dotted] (0.1,0) -- (0.1,1.14);
\node at (0.1, -0.2) {$x_2$};
\filldraw[black] (-0.5,0.5) circle (1pt) node[anchor=west]{};
\draw[dotted] (-0.5,0) -- (-0.5,0.5);
\node at (-0.5, -0.2) {$z_1$};
\filldraw[black] (0.5,0.5) circle (1pt) node[anchor=west]{};
\draw[dotted] (0.5,0) -- (0.5,0.5);
\node at (0.5, -0.2) {$z_2$};
% add a legend
\end{tikzpicture}
\end{center}
\caption{An illustration of a w-shape non-convex loss $V\colon \R\to\R$. Two global minimizers $z_1, z_2$ serve as grid points. $x_1, x_2$ are player's decision points. }
\label{fig:non-cvx}
\end{figure}

Next, we showcase a series of w-shape non-convex functions that satisfy Assumptions~\ref{asmp:non-expansion}, \ref{asmp:shrinking}. 

\begin{lemma}
Consider loss functions 
\[
V_t(x) = 
\begin{cases}
a^t (x+1)^2, & \text{ if } x< 0, \\
a^t (x-1)^2, & \text{ if } x\ge 0,
\end{cases}
\]
and grid points $-1 = z_1 < z_2<\dots < z_n = 1$ for some arbitrary $n$. 
If there exists $\epsilon>0$ such that $a^t\ge \epsilon$, $a^t\eta < \frac{1}{2}$ for all $t$, then Assumptions~\ref{asmp:non-expansion}, \ref{asmp:shrinking} hold. 
\end{lemma}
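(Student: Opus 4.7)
The plan is to exploit the fact that every $V_t = a^t W$ for a single W-shape template $W$ (with $W(x) = (x+1)^2$ for $x < 0$ and $W(x) = (x-1)^2$ for $x \geq 0$), so that the feasible set $R^t$ is independent of $t$ and the minimal-selection slope obeys $\xi_x^{V_t} = a^t \xi_x^W$. The verification proceeds in four steps: (i) identify $R^t = (-\infty, -1] \cup [1, +\infty)$, (ii) compute $\xi_x = a^t(x+1)$ on $\{x \le -1\}$ and $\xi_x = a^t(x-1)$ on $\{x \ge 1\}$, (iii) verify non-expansion using $\eta a^t < 1/2$, and (iv) lower bound the Gaussian escape probability using $a^t \ge \epsilon$.

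For step (i), given any $x \in (-1, 1)$ the two endpoint grid points $z_1 = -1$ and $z_n = 1$ are both global minima of $V_t$ with $V_t(z_1) = V_t(z_n) = 0$, so the feasibility system simultaneously forces $\xi \ge V_t(x)/(x+1) > 0$ (from $z_1$) and $\xi \le V_t(x)/(x-1) < 0$ (from $z_n$), which is infeasible. For $x \le -1$, I would argue that the chord slope to $z_1 = -1$, namely $s_1 := a^t(x+1)$, minorizes every other chord slope $(V_t(x) - V_t(z_i))/(x - z_i)$ and is therefore the minimal-norm feasible $\xi_x$: for $z_i \in (-1, 0)$ the chord slope equals $a^t(x + z_i + 2) = s_1 + a^t(z_i + 1) \ge s_1$; for $z_i \in [0, 1]$ the difference rewrites as $a^t[(z_i - 1)^2 - (x + 1)(z_i + 1)]/(z_i - x)$, whose numerator is a quadratic in $z_i$ with discriminant $(x + 1)(x + 9)$ and roots that are either non-real (when $x \in (-9, -1)$) or both negative (when $x \le -9$, since the root sum $3 + x$ is negative and the product $-x$ is positive), hence nonnegative on $[0, 1]$. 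I expect this uniform-in-grid-point domination argument to be the main technical obstacle.

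With $R^t$ and $\xi_x$ in hand, the remaining steps are short. For non-expansion on $x \le -1$, the update reads $x - \eta \xi_x = (1 - \eta a^t)(x + 1) - 1$; since $\eta a^t < 1/2$ gives $1 - \eta a^t > 0$ while $x + 1 \le 0$, the product is $\le 0$ and the update stays $\le -1$, hence in $R^{t+1} = R^t$. The case $x \ge 1$ is symmetric, so $R^t_\eta \subseteq R^{t+1}$. For shrinking, the observation $V_t(z_1) = V_t(z_n) = 0$ together with $V_t \ge 0$ yields $\max_i [V_t(x) - V_t(z_i)]_+ = V_t(x)$ for every $x$. For $x \in (-1, 0]$ with $V_t(x) = a^t(x + 1)^2$, the exploration step reads $x - \sqrt{\eta a^t}\,(x + 1)\, g$, and lands in $(-\infty, -1]$ whenever $g \ge 1/\sqrt{\eta a^t}$; since $a^t \ge \epsilon$, this event has probability at least $\gamma := \Pr_{g \sim N(0, 1)}(g \ge 1/\sqrt{\eta \epsilon}) > 0$, a positive constant uniform in $t$ and $x$. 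The case $x \in [0, 1)$ is handled by reflection symmetry, completing the verification.
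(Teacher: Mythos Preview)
Your proof is correct and follows the same four-step outline as the paper: identify $R^t=(-\infty,-1]\cup[1,\infty)$, compute $\xi_x$ on the feasible set, check non-expansion, and lower-bound the Gaussian escape probability by $\Pr\big(|g|\ge 1/\sqrt{\eta\epsilon}\big)$. You are in fact more careful than the paper on two points. First, the paper simply asserts the form of $R^t$ and takes $\xi_x=2a^t\delta$ (the gradient $\nabla V_t$), whereas you verify infeasibility on $(-1,1)$ from the two endpoint constraints and then show, by checking every grid-point chord, that the constrained minimal selection is the chord slope $a^t(x\pm1)$ to the nearest endpoint. Your value is the actual output of the optimization~\eqref{eq:feasible}; the paper's $2a^t\delta$ is feasible but not minimal-norm. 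The discrepancy is harmless for the lemma: with your $\xi_x$ the non-expansion already holds under $\eta a^t<1$, so the stated hypothesis $\eta a^t<\tfrac12$ suffices either way. Second, your quadratic-in-$z_i$ argument with discriminant $(x+1)(x+9)$ is exactly the kind of uniform-in-grid verification that the paper omits; it is correct as written.
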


\begin{proof}
	The feasible region is $R^t = (-\infty, -1]\cup [1, \infty)$ for all $t$. 
	We first verify Assumption~\ref{asmp:non-expansion}. 
	Without loss of generality, we consider positive feasible points $x^t = 1 + \delta^t \in R^t$ for some $\delta^t\ge0$. 
	The minimal selection of subdifferential returns $\xi_x = 2a^t\delta^t$. Therefore $x^{t+1} = x^t - 2\eta a^t \delta^t = 1 + (1 - 2a^t\eta)\delta$. Under the condition $a^t\eta<\frac{1}{2}$, we have $x^{t+1}\ge1$, i.e., $R^t_\eta\subset R^{t+1}$.
	
	Now, we verify Assumption~\ref{asmp:shrinking}.
	For any infeasible $x^t\in[0, 1)$, update by random exploration
	\[
	x^{t+1} = x_t - \sqrt{\eta V_t(x^t)}g \; ,
	\]
	where $g\sim\mathcal{N}(0, 1)$. 
	Then, 
	\begin{align*}
		\mathbb{P}(x^{t+1}\in R^{t+1}) &\ge \mathbb{P}(x^{t} - \sqrt{\eta V_t(x^t)}g\ge 1) \\
		&= \mathbb{P}( - \sqrt{\eta a^t}(1-x^t)g\ge 1 - x^t) \\
		&= \mathbb{P}\left(g\le -\frac{1}{\sqrt{a^t \eta}}\right) \\
		&\ge \mathbb{P}\left(g\le -\frac{1}{\sqrt{\epsilon \eta}}\right)\; ,
	\end{align*}
	where the last inequality is due to the condition $a^t\ge \epsilon$. 
	Note that the lower bound above holds for any $x\in[0, 1)$. By a symmetric argument, we conclude that Assumption~\ref{asmp:shrinking} is satisfied with $\gamma = \mathbb{P}(g\le -1/\sqrt{\epsilon \eta})$. 
\end{proof}

\subsection{Results in Subsection \ref{sec:extensions}}
\label{sec:appendix-extensions}
Here, we provide detailed explanations on the results presented in Subsection \ref{sec:extensions}. First, using a similar EVI technique, we derive the following regret bound for the relaxed minimal selection algorithm.

\begin{theorem}
	\label{prop:slack}
	Assume $\Omega = \R^d$. Let $\mu_t$, $\xi_j^t$, and $s_j^t$ be the output of the relaxed minimal selection algorithm. For any $V_t$ and any measure $\nu \in \cP_2(\R^d)$ supported on $Z$,
	\begin{equation}
		\label{eq:bound-slack}
		\sum_{t = 1}^{T} \cV_t(\mu_t) - \sum_{t = 1}^{T} \cV_t(\nu) 
		\le 
		\frac{W_2^2(\mu_1, \nu) - W_2^2(\mu_{T + 1}, \nu)}{2 \eta}
		+ \frac{\eta}{2} \sum_{t = 1}^{T} \frac{1}{m} \sum_{j = 1}^{m} \left(\|\xi_j^t\|^2 + \frac{2 s_j^t}{\eta}\right) \; .
	\end{equation}
\end{theorem}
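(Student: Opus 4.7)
The plan is to adapt the proof of Theorem \ref{prop:cvx} in Appendix \ref{sec:prop:cvx} essentially verbatim, with the extra slack terms absorbing the failure of the original feasibility constraint. The key observation is that although the relaxed constraint $-s_j^t + V_t(x_j^t) - V_t(z_i) \leq \langle \xi_j^t, x_j^t - z_i\rangle$ is strictly weaker than \eqref{eq:obj2}, it is always feasible (take $\xi_j = 0$ and $s_j$ sufficiently large), so $(\xi_j^t, s_j^t)$ is well-defined and unique. Moreover, the same constraint still certifies a lower bound on $\langle \xi_j^t, x_j^t - z_i \rangle$ needed to control the cross term arising from the coupling expansion.

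First, I would decompose $\nu = \sum_{i=1}^n a_i \delta_{z_i}$ and select an optimal coupling of the form $\sum_{i,j} \pi_{ji} \delta_{(x_j^t, z_i)}$ with marginals $\sum_i \pi_{ji} = 1/m$ and $\sum_j \pi_{ji} = a_i$. The pushed-forward measure $\sum_{i,j} \pi_{ji} \delta_{(x_j^{t+1}, z_i)}$ lies in $\Pi(\mu_{t+1}, \nu)$, so
\begin{equation*}
W_2^2(\mu_{t+1}, \nu) - W_2^2(\mu_t, \nu) \le \sum_{i,j} \pi_{ji}\bigl(\|x_j^{t+1}-z_i\|^2 - \|x_j^t-z_i\|^2\bigr).
\end{equation*}
Substituting $x_j^{t+1} = x_j^t - \eta \xi_j^t$ and expanding gives terms $\eta^2 \|\xi_j^t\|^2 - 2\eta \langle \xi_j^t, x_j^t - z_i\rangle$. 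Applying the relaxed constraint yields $-2\eta \langle \xi_j^t, x_j^t - z_i\rangle \le -2\eta (V_t(x_j^t) - V_t(z_i)) + 2\eta s_j^t$, and summing against $\pi_{ji}$ using the marginal conditions collapses the middle term into $-2\eta(\cV_t(\mu_t) - \cV_t(\nu))$ and the slack term into $\frac{2\eta}{m}\sum_j s_j^t$.

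Rearranging produces the per-round inequality
\begin{equation*}
\cV_t(\mu_t) - \cV_t(\nu) \le \frac{W_2^2(\mu_t, \nu) - W_2^2(\mu_{t+1}, \nu)}{2\eta} + \frac{\eta}{2} \cdot \frac{1}{m} \sum_{j=1}^m \left(\|\xi_j^t\|^2 + \frac{2 s_j^t}{\eta}\right),
\end{equation*}
after which a telescoping sum over $t = 1,\ldots,T$ yields \eqref{eq:bound-slack}. There is no serious obstacle here: the argument is a near-mechanical modification of Theorem \ref{prop:cvx}, since the objective of the relaxed program $\|\xi_j\|^2 + \frac{2}{\eta}s_j$ is precisely calibrated so that the slack contribution from the constraint bound $2\eta s_j^t$ matches the $\eta \cdot \frac{2 s_j^t}{\eta}$ term appearing inside the stated upper bound. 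The only mild subtlety worth stating carefully is that one does not need any convexity of $V_t$, because the $s_j^t$ slack absorbs any deficit and no displacement convexity of $\cV_t$ is invoked in the coupling-based argument above.
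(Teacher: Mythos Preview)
Your proposal is correct and follows essentially the same approach as the paper: choose an optimal coupling between $\mu_t$ and $\nu$, use it as a (suboptimal) coupling for $(\mu_{t+1},\nu)$, expand the squares, apply the relaxed constraint \eqref{eq:constraint-slack} to bound the cross term, and telescope. Your remark that no convexity of $V_t$ is needed because the slack absorbs any deficit is exactly the point.
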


\begin{proof}
	As in the proof of Theorem \ref{prop:cvx}, let $\sum_{i = 1}^{n} \sum_{j = 1}^{m} \pi_{j i} \delta_{(x_j^t, z_i)}$ be an optimal coupling between $\mu_t$ and $\nu = \sum_{i} a_i \delta_{z_i}$. Again, it is a coupling between $\nu$ and $\mu_{t + 1}$, hence
	\begin{align*}
		W_2^2(\mu_{t + 1}, \nu) - W_2^2(\mu_t, \nu) 
		& \le \sum_{i = 1}^{n} \sum_{j = 1}^{m} \|x_j^{t + 1} - z_i\|^2 \pi_{j i} - \sum_{i = 1}^{n} \sum_{j = 1}^{m} \|x_j^t - z_i\|^2 \pi_{j i} \\
		& = \sum_{i = 1}^{n} \sum_{j = 1}^{m} \left(\eta^2 \|\xi_j^t\|^2 - 2 \eta \langle \xi_j^t, x_j^t - z_i \rangle\right) \pi_{j i} \\
		& \le \sum_{i = 1}^{n} \sum_{j = 1}^{m} \left(\eta^2 \|\xi_j^t\|^2 + 2 \eta s_j^t - 2 \eta (V_t(x_j^t) - V_t(z_i))\right) \pi_{j i} \\
		& = \frac{\eta^2}{m} \sum_{j = 1}^{m} \left(\|\xi_j^t\|^2 + \frac{2 s_j^t}{\eta}\right) - 2 \eta \left(\cV_t(\mu_t) - \cV_t(\nu)\right) \; .
	\end{align*}
	Hence, 
	\begin{equation}
		\label{eq:bound-slack-1}
		\cV_t(\mu_t) - \cV_t(\nu) 
		\le \frac{W_2^2(\mu_t, \nu) - W_2^2(\mu_{t + 1}, \nu)}{2 \eta} + \frac{\eta}{2} \cdot \frac{1}{m} \sum_{j = 1}^{m} \left(\|\xi_j^t\|^2 + \frac{2 s_j^t}{\eta}\right)
	\end{equation}
	\eqref{eq:bound-slack} follows by summing \eqref{eq:bound-slack-1} iteratively.
\end{proof}

In fact, we can further upper bound \eqref{eq:bound-slack-1}. Note that $\xi_j = 0$ and $s_j = \max_{i \in [n]} [V_t(x_j^t) - V_t(z_i)]_+$ satisfy the constraint \eqref{eq:constraint-slack}, hence the minimizer $(\xi_j^t, s_j^t)$ satisfies
\begin{equation*}
	\|\xi_j^t\|^2 + \frac{2 s_j^t}{\eta} \le \frac{2 \max_{i \in [n]} [V_t(x_j^t) - V_t(z_i)]_+}{\eta} \; .
\end{equation*}
Using this upper bound for $j \notin S^t$, we have 
\begin{equation}
	\label{eq:slack-bound-rand}
	\begin{aligned}
		\cV_t(\mu_t) - \cV_t(\nu) 
		& \le \frac{W_2^2(\mu_t, \nu) - W_2^2(\mu_{t + 1}, \nu)}{2 \eta} \\ 
		& \quad + \frac{\eta}{2} \cdot \frac{1}{m} \sum_{j \in S^t} \|\xi_j^t\|^2 + \frac{1}{m} \sum_{j \notin S^t} \max_{i \in [n]} [V_t(x_j^t) - V_t(z_i)]_+ \; .		
	\end{aligned}
\end{equation}
Notice that this upper bound is comparable to \eqref{eq:bound-rand}. Therefore, the relaxed minimal selection algorithm and the minimal selection or exploration algorithm admit essentially the same upper bound.

Next, we present a regret bound for the case where the loss is given according to the interaction functional. 

\begin{theorem}
	\label{prop:interaction}
	Assume $\Omega = \R^d$ and consider the game with the loss \eqref{eq:interaction-loss} discussed in Subsection \ref{sec:extensions}. Let $\mu_t$ and $\xi_j^t$ be the output of the minimal selection algorithm with the constraint \eqref{eq:constraint-interaction}. If $W^t$ is convex, for any measure $\nu \in \cP_2(\R^d)$ supported on $Z$,
	\begin{equation}
		\label{eq:interaction-bound}
		\sum_{t = 1}^{T} \cW_t(\mu_t) - \sum_{t = 1}^{T} \cW_t(\nu) 
		\le 
		\frac{W_2^2(\mu_1, \nu) - W_2^2(\mu_{T + 1}, \nu)}{2 \eta} + \frac{\eta}{2} \sum_{t = 1}^{T} \left(\frac{1}{m} \sum_{j = 1}^m \|\xi_j^t\|^2\right) \; .
	\end{equation}
\end{theorem}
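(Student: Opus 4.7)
The plan is to mimic the proof of Theorem \ref{prop:cvx} almost verbatim, replacing the pointwise convexity inequality for $V_t$ by the constraint \eqref{eq:constraint-interaction} of the minimal selection program. Write $\nu = \sum_{i=1}^n a_i \delta_{z_i}$ with $a_i \geq 0$ and $\sum_i a_i = 1$, and fix an optimal coupling between $\mu_t$ and $\nu$ of the form $\sum_{i,j} \pi_{ji}\, \delta_{(x_j^t, z_i)}$, where $\sum_i \pi_{ji} = \tfrac{1}{m}$ and $\sum_j \pi_{ji} = a_i$. After the update $x_j^{t+1} = x_j^t - \eta \xi_j^t$, the measure $\sum_{i,j} \pi_{ji}\, \delta_{(x_j^{t+1}, z_i)}$ is still a (generally suboptimal) coupling between $\mu_{t+1}$ and $\nu$, so
\begin{equation*}
W_2^2(\mu_{t+1}, \nu) - W_2^2(\mu_t, \nu) \;\le\; \sum_{i,j} \pi_{ji} \bigl(\|x_j^{t+1} - z_i\|^2 - \|x_j^t - z_i\|^2\bigr) \;=\; \sum_{i,j} \pi_{ji} \bigl(\eta^2 \|\xi_j^t\|^2 - 2\eta \langle \xi_j^t, x_j^t - z_i\rangle\bigr).
\end{equation*}

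Next I would invoke \eqref{eq:constraint-interaction} to lower-bound $\langle \xi_j^t, x_j^t - z_i\rangle$ and collect terms using the marginal identities for $\pi_{ji}$. The quadratic term contributes $\tfrac{\eta^2}{m}\sum_j \|\xi_j^t\|^2$, while the linear term produces
\begin{equation*}
-2\eta \sum_{i,j} \pi_{ji} \langle \xi_j^t, x_j^t - z_i \rangle \;\le\; -2\eta \Bigl[\cW_t(\mu_t) \;-\; \sum_{i=1}^n a_i \min_{k\in[n]} W^t(z_i - z_k)\Bigr],
\end{equation*}
where I used $\sum_i \pi_{ji} = 1/m$ to turn $\sum_{i,j} \pi_{ji} \tfrac{1}{m}\sum_k W^t(x_j^t - x_k^t)$ into $\cW_t(\mu_t)$ exactly, and $\sum_j \pi_{ji} = a_i$ for the other piece.

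The only genuinely new ingredient compared with the potential-functional case is dealing with the $\min_{k \in [n]}$ term, which must be related back to $\cW_t(\nu)$. The key observation is that, since $\{a_k\}_{k=1}^n$ is a probability vector, a minimum over $k$ is dominated by any convex combination:
\begin{equation*}
\min_{k \in [n]} W^t(z_i - z_k) \;\le\; \sum_{k=1}^n a_k\, W^t(z_i - z_k),
\end{equation*}
and summing with weights $a_i$ yields $\sum_i a_i \min_k W^t(z_i - z_k) \le \sum_{i,k} a_i a_k W^t(z_i - z_k) = \cW_t(\nu)$. Substituting back, rearranging, and summing the per-round telescoping bound
\begin{equation*}
\cW_t(\mu_t) - \cW_t(\nu) \;\le\; \frac{W_2^2(\mu_t, \nu) - W_2^2(\mu_{t+1}, \nu)}{2\eta} + \frac{\eta}{2}\cdot \frac{1}{m}\sum_{j=1}^m \|\xi_j^t\|^2
\end{equation*}
over $t = 1, \ldots, T$ produces \eqref{eq:interaction-bound}. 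The hardest conceptual step is noticing that the $\min_k$ appearing in \eqref{eq:constraint-interaction} is tailored precisely so that the player's loss at $\nu$ (which, unlike in the potential case, involves a double integral and hence couples different grid points) can still be recovered via this convex-combination inequality; convexity of $W^t$ is not used in the regret derivation itself, only to guarantee that the minimal selection program \eqref{eq:constraint-interaction} is feasible so that $\xi_j^t$ is well-defined.
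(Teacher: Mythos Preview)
Your proof is correct and matches the paper's own argument essentially line for line: the same coupling trick, the same use of the constraint \eqref{eq:constraint-interaction}, and the same convex-combination inequality $\min_{k} W^t(z_i - z_k) \le \sum_k a_k W^t(z_i - z_k)$ to pass from the grid-point minimum to $\cW_t(\nu)$. Your closing remark that convexity of $W^t$ enters only through feasibility of the program (not in the regret computation itself) is also consistent with how the paper's proof proceeds.
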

\begin{proof}
	As in the proof of Theorem \ref{prop:cvx}, let $\sum_{i = 1}^{n} \sum_{j = 1}^{m} \pi_{j i} \delta_{(x_j^t, z_i)}$ be an optimal coupling between $\mu_t$ and $\nu = \sum_{i} a_i \delta_{z_i}$. Again, it is a coupling between $\nu$ and $\mu_{t + 1}$, hence
	\begin{align*}
		W_2^2(\mu_{t + 1}, \nu) - W_2^2(\mu_t, \nu) 
		& \le \sum_{i = 1}^{n} \sum_{j = 1}^{m} \|x_j^{t + 1} - z_i\|^2 \pi_{j i} - \sum_{i = 1}^{n} \sum_{j = 1}^{m} \|x_j^t - z_i\|^2 \pi_{j i} \\
		& = \sum_{i = 1}^{n} \sum_{j = 1}^{m} \left(\eta^2 \|\xi_j^t\|^2 - 2 \eta \langle \xi_j^t, x_j^t - z_i \rangle\right) \pi_{j i} \; .
	\end{align*}
	Using \eqref{eq:constraint-interaction},
	\begin{align*}
		\sum_{i = 1}^{n} \sum_{j = 1}^{m} - 2 \eta \langle \xi_j^t, x_j^t - z_i \rangle \pi_{j i} 
		& \le \sum_{i = 1}^{n} \sum_{j = 1}^{m} - 2 \eta \left(\frac{1}{m} \sum_{k = 1}^{m} W^t(x_j^t - x_k^t) - \min_{k \in [n]} W^t(z_i - z_k)\right) \pi_{j i} \\
		& = - 2 \eta \left(\frac{1}{m^2} \sum_{j, k = 1}^{m} W^t(x_j^t - x_k^t) - \sum_{i = 1}^{n} a_i \cdot \min_{k \in [n]} W^t(z_i - z_k)\right) \\
		& \le - 2 \eta \left(\frac{1}{m^2} \sum_{j, k = 1}^{m} W^t(x_j^t - x_k^t) - \sum_{i = 1}^{n} a_i \sum_{k = 1}^{n} a_k W^t(z_i - z_k)\right) \\
		& = - 2 \eta \left(\cW_t(\mu_t) - \cW_t(\nu)\right) \; . 
	\end{align*}
	Therefore, we have 
	\begin{equation}
		\label{eq:interaction-1}
		\cW_t(\mu_t) - \cW_t(\nu) 
		\le 
		\frac{W_2^2(\mu_t, \nu) - W_2^2(\mu_{t + 1}, \nu)}{2 \eta} + \frac{\eta}{2} \cdot \frac{1}{m} \sum_{j = 1}^m \|\xi_j^t\|^2 \; ,
	\end{equation}
	and \eqref{eq:interaction-bound} follows by summing \eqref{eq:interaction-1} iteratively.
\end{proof}

Lastly, we prove that the projection step does not affect the regret bound.
\begin{theorem}
	\label{prop:projection}
	Assume $\Omega$ is closed and convex. Let $\mu_t$ and $\xi_j^t$ be the output of the minimal selection algorithm with a modified update rule $x_j^{t + 1} = P_{\Omega}(x_j^t - \eta \xi_j^t)$. If $V_t$ is convex, for any measure $\nu \in \cP_2(\R^d)$ supported on $Z$,
	\begin{equation*}
		\sum_{t = 1}^{T} \cV_t(\mu_t) - \sum_{t = 1}^{T} \cV_t(\nu) 
		\le 
		\frac{W_2^2(\mu_1, \nu) - W_2^2(\mu_{T + 1}, \nu)}{2 \eta} + \frac{\eta}{2} \sum_{t = 1}^{T} \left(\frac{1}{m} \sum_{j = 1}^m \|\xi_j^t\|^2\right) \; .
	\end{equation*}
\end{theorem}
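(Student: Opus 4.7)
My plan is to mimic the proof of Theorem \ref{prop:cvx} exactly, inserting one extra step that exploits the non-expansiveness of the Euclidean projection $P_\Omega$ onto a closed convex set. Recall that for any closed convex $\Omega \subset \R^d$ and any $z \in \Omega$, we have $\|P_\Omega(x) - z\| \leq \|x - z\|$ for all $x \in \R^d$. Since $\nu$ is supported on $Z \subset \Omega$, every $z_i$ appearing in the support of $\nu$ lies in $\Omega$, so this non-expansiveness applies to each of them.

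First, I would introduce the pre-projection iterate $\tilde{x}_j^{t+1} := x_j^t - \eta \xi_j^t$, so that $x_j^{t+1} = P_\Omega(\tilde{x}_j^{t+1})$. Then, following the proof of Theorem \ref{prop:cvx}, I pick an optimal coupling of $\mu_t$ and $\nu = \sum_i a_i \delta_{z_i}$, say $\sum_{i,j} \pi_{ji} \delta_{(x_j^t, z_i)}$, and note that $\sum_{i,j} \pi_{ji} \delta_{(x_j^{t+1}, z_i)}$ is a (not necessarily optimal) coupling of $\mu_{t+1}$ and $\nu$. This gives the chain
\begin{align*}
W_2^2(\mu_{t+1}, \nu) - W_2^2(\mu_t, \nu)
&\leq \sum_{i,j} \|x_j^{t+1} - z_i\|^2 \pi_{ji} - \sum_{i,j} \|x_j^t - z_i\|^2 \pi_{ji} \\
&\leq \sum_{i,j} \|\tilde{x}_j^{t+1} - z_i\|^2 \pi_{ji} - \sum_{i,j} \|x_j^t - z_i\|^2 \pi_{ji},
\end{align*}
where the second line is the key new step, using non-expansiveness of $P_\Omega$ at each $z_i \in \Omega$.

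From this point the argument is identical to that of Theorem \ref{prop:cvx}: the right-hand side equals $\sum_{i,j} (\eta^2 \|\xi_j^t\|^2 - 2\eta \langle \xi_j^t, x_j^t - z_i\rangle) \pi_{ji}$, and convexity of $V_t$ together with the feasibility constraint \eqref{eq:obj2} gives $\langle \xi_j^t, x_j^t - z_i\rangle \geq V_t(x_j^t) - V_t(z_i)$, whence
\begin{equation*}
\cV_t(\mu_t) - \cV_t(\nu) \leq \frac{W_2^2(\mu_t, \nu) - W_2^2(\mu_{t+1}, \nu)}{2\eta} + \frac{\eta}{2} \cdot \frac{1}{m}\sum_{j=1}^m \|\xi_j^t\|^2.
\end{equation*}
Summing telescopically over $t \in [T]$ yields the desired bound.

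There is no real obstacle here; the only subtlety worth flagging is that non-expansiveness must be invoked pointwise against each $z_i \in \Omega$, which is precisely why the assumption $Z \subset \Omega$ (built into the problem setup) is needed. Well-definedness of $P_\Omega$ follows from $\Omega$ being closed and convex. Note also that feasibility of \eqref{eq:obj2} is guaranteed by convexity of $V_t$, so $\xi_j^t$ is well defined exactly as in Theorem \ref{prop:cvx}.
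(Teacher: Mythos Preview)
Your proposal is correct and matches the paper's own proof essentially line for line: both pick an optimal coupling of $\mu_t$ and $\nu$, bound $W_2^2(\mu_{t+1},\nu)$ via the induced coupling, insert the single extra step $\|P_\Omega(x_j^t - \eta \xi_j^t) - z_i\| \le \|x_j^t - \eta \xi_j^t - z_i\|$ using non-expansiveness of $P_\Omega$ (valid since $z_i \in Z \subset \Omega$), and then finish exactly as in Theorem~\ref{prop:cvx}. Your remark that convexity of $V_t$ guarantees feasibility of \eqref{eq:obj2} is also in line with the paper's treatment.
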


\begin{proof}
	As in the proof of Theorem \ref{prop:cvx}, let $\sum_{i = 1}^{n} \sum_{j = 1}^{m} \pi_{j i} \delta_{(x_j^t, z_i)}$ be an optimal coupling between $\mu_t$ and $\nu = \sum_{i} a_i \delta_{z_i}$. Again, it is a coupling between $\nu$ and $\mu_{t + 1}$, hence
	\begin{equation*}
		W_2^2(\mu_{t + 1}, \nu) - W_2^2(\mu_t, \nu) 
		\le \sum_{i = 1}^{n} \sum_{j = 1}^{m} \|x_j^{t + 1} - z_i\|^2 \pi_{j i} - \sum_{i = 1}^{n} \sum_{j = 1}^{m} \|x_j^t - z_i\|^2 \pi_{j i} \; .
	\end{equation*}
	One property of the projection $P_{\Omega}$ is that $\|x_j^{t + 1} - \omega\| \le \|x_j^t - \eta \xi_j^t - \omega\|$ for all $\omega \in \Omega$. Thus, 	
	\begin{align*}		
		W_2^2(\mu_{t + 1}, \nu) - W_2^2(\mu_t, \nu) 
		& \le \sum_{i = 1}^{n} \sum_{j = 1}^{m} \|x_j^{t + 1} - z_i\|^2 \pi_{j i} - \sum_{i = 1}^{n} \sum_{j = 1}^{m} \|x_j^t - z_i\|^2 \pi_{j i} \\
		& \le \sum_{i = 1}^{n} \sum_{j = 1}^{m} \|x_j^t - \eta \xi_j^t - z_i\|^2 \pi_{j i} - \sum_{i = 1}^{n} \sum_{j = 1}^{m} \|x_j^t - z_i\|^2 \pi_{j i} \\
		& = \sum_{i = 1}^{n} \sum_{j = 1}^{m} \left(\eta^2 \|\xi_j^t\|^2 - 2 \eta \langle \xi_j^t, x_j^t - z_i \rangle\right) \pi_{j i} \\
		& \le \eta^2 \left(\frac{1}{m} \sum_{j = 1}^{m} \|\xi_j^t\|^2\right) - 2 \eta \left(\cV_t(\mu_t) - \cV_t(\nu)\right) \; ,
	\end{align*}
	where the last inequality directly follows as in the proof of Theorem \ref{prop:cvx}. Therefore, the regret bounds in Theorem \ref{prop:cvx} still hold. 
\end{proof}

\begin{remark}
	We can apply this projection step to the MSoE algorithm and the relaxed minimal selection algorithm as well; we modify their update rules by combining them with $P_{\Omega}$. Then, the regret bounds in Theorems \ref{prop:rand} and \ref{prop:slack} still hold. 
\end{remark}

\section{Simulations}\label{sec:simulations}
In this section, we examine the empirical performance of the minimal selection and the MSoE algorithms. Here, we consider a toy example where $\Omega = \{x \in \R^2 : \|x\| \le 1\}$, $m = 10$, and $Z$ consists of uniform grid points of $\Omega$ with $n = 797$, obtained by forming uniform grid points of $[-1, 1]^2$ and choose a subset of included in $\Omega$. Code for reproducing all the results below is provided in the supplementary material.

\paragraph{Convex case} First, we consider a simple quadratic function $V_t(x) = \|x - u_t\|^2$, where $u_1 = (-\frac{1}{\sqrt{2}}, -\frac{1}{\sqrt{2}})$ and $u_t = u_1 + (0.15 t, 0.15 t)$. For better understanding, we may imagine a situation, where we deploy $m$ drones to track a moving target $u_t$ using the signals $V_t$ (distances to the target) captured at $Z$ (fixed stations with sensors) and $\{x_j^t\}_{j \in [m]}$ (drones with mobile sensors). Figure \ref{fig:convex_ms} shows how the minimal selection algorithm moves the decision points. At $t = 1$ (Figure \ref{fig:convex_ms}(a)), the decision points, which are randomly initialized, start moving towards the darker region based on $\xi_j^t$'s from the minimal selection algorithm. In this simple toy example, we can see that the decision points quickly gather around the minimum of $V_t$ (the target) and follow it.

\begin{figure}[ht]
    \centering
	\subfloat[$t = 1$]{\includegraphics[width=0.24\textwidth]{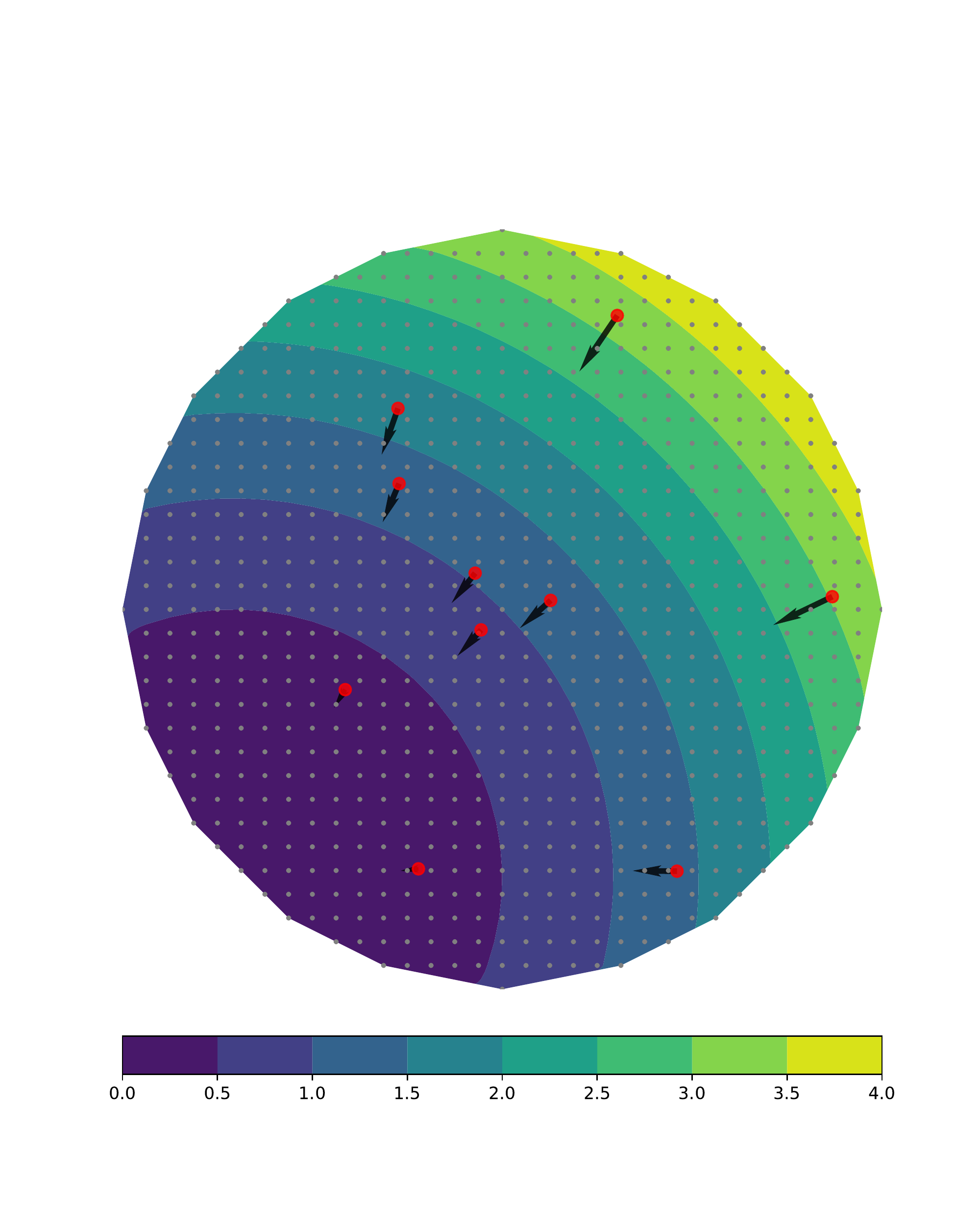}} 
	\subfloat[$t = 3$]{\includegraphics[width=0.24\textwidth]{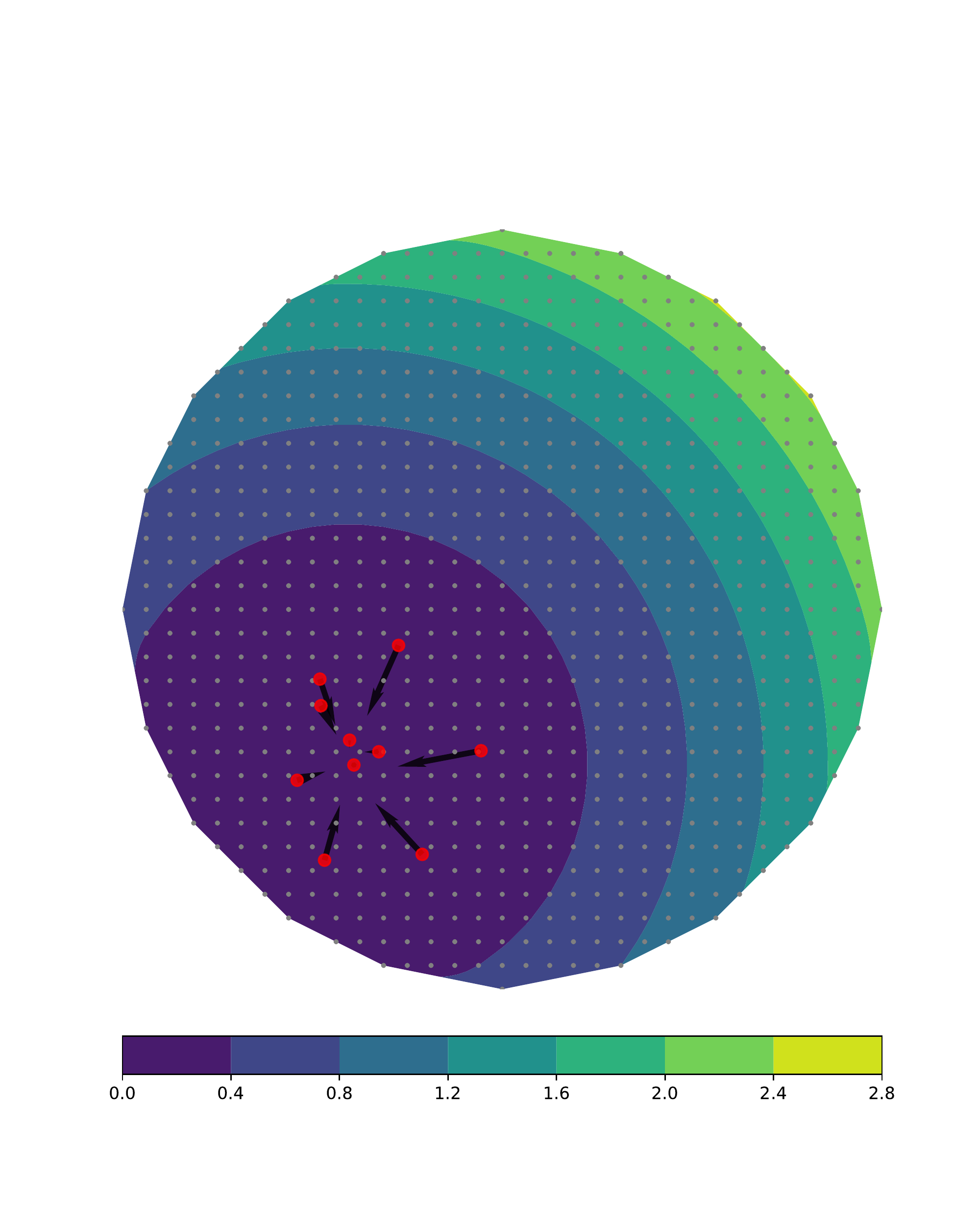}}
	\subfloat[$t = 5$]{\includegraphics[width=0.24\textwidth]{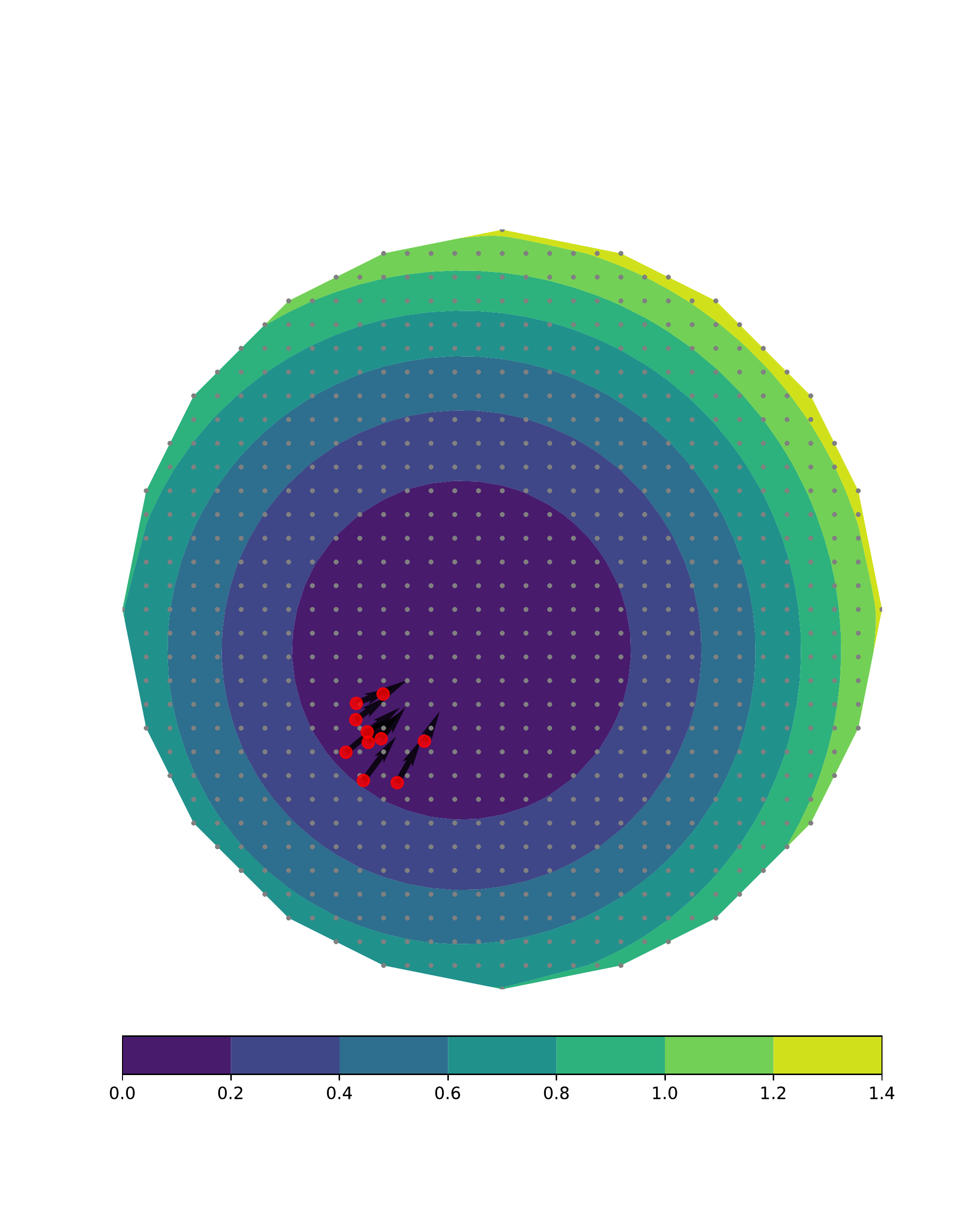}}
	\subfloat[$t = 7$]{\includegraphics[width=0.24\textwidth]{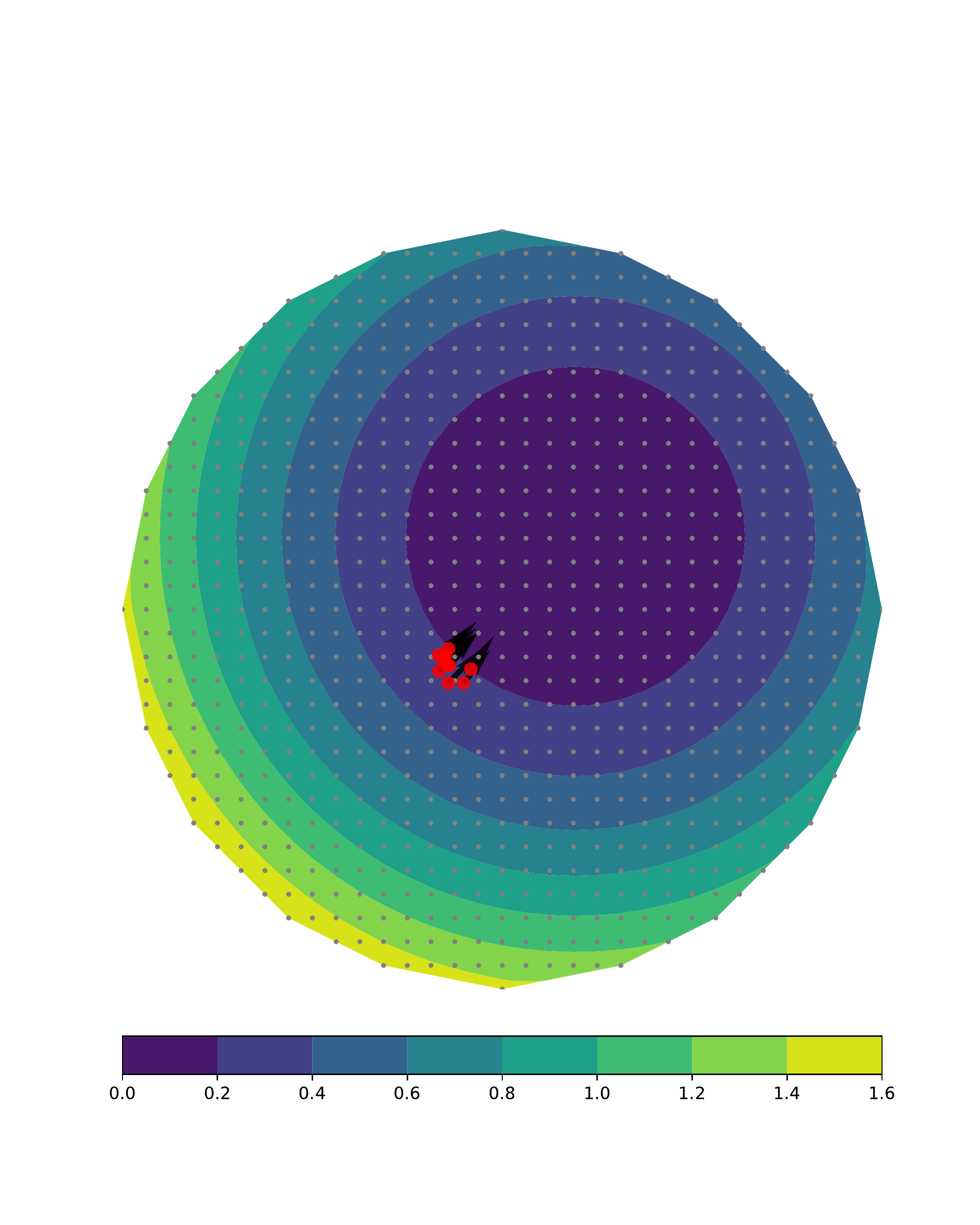}}
	\caption{Minimal selection algorithm for convex $V_t$ with stepsize $\eta = 0.2$. The gray dots and the red circles denote $Z$ and $\{x_j^t\}_{j \in [m]}$, respectively. The black solid arrows show $\xi_j^t$'s. The contour regions represent the level of $V_t$ (darker = smaller as shown in the horizontal colorbars).}
	\label{fig:convex_ms}
\end{figure}

\paragraph{Non-convex case} As a simple non-convex example, consider $V_t(x) = \min\{\|x - u_t\|^2,  \|x - v_t\|^2\}$, where $u_1 = v_1 = (-\frac{1}{\sqrt{2}}, -\frac{1}{\sqrt{2}})$, $u_t = u_1 + (0.165 t, 0.11 t)$, and $v_t = v_1 + (0.11 t, 0.165 t)$. As in the convex case, we may interpret $u_t$ and $v_t$ as moving targets, while the signal is the distance to a closer target. Figure \ref{fig:nonconvex_msoe} shows how the MSoE algorithm works. As opposed to the convex case, we can check that there are infeasible decision points ($j \notin S^t$ as in Definition \ref{def:MSoE}) after $t = 3$. The MSoE algorithm let such points move along random directions, thereby continuing the tracking situation; the plain minimal selection would have ended up stopping all the decision points, losing the targets. Although infeasible points might get far away from the targets as in Figure \ref{fig:nonconvex_msoe}(e) and Figure \ref{fig:nonconvex_msoe}(f), once they get into a feasible region, they start moving again towards the darker area based on $\xi_j^t$'s from the minimal selection. Figure \ref{fig:nonconvex_msoe}(g) and Figure \ref{fig:nonconvex_msoe}(h) show that all the decision points somehow get closer to the darker area after repeating the minimal selection and exploration.

\begin{figure}[ht]
    \centering
	\subfloat[$t = 1$]{\includegraphics[width=0.24\textwidth]{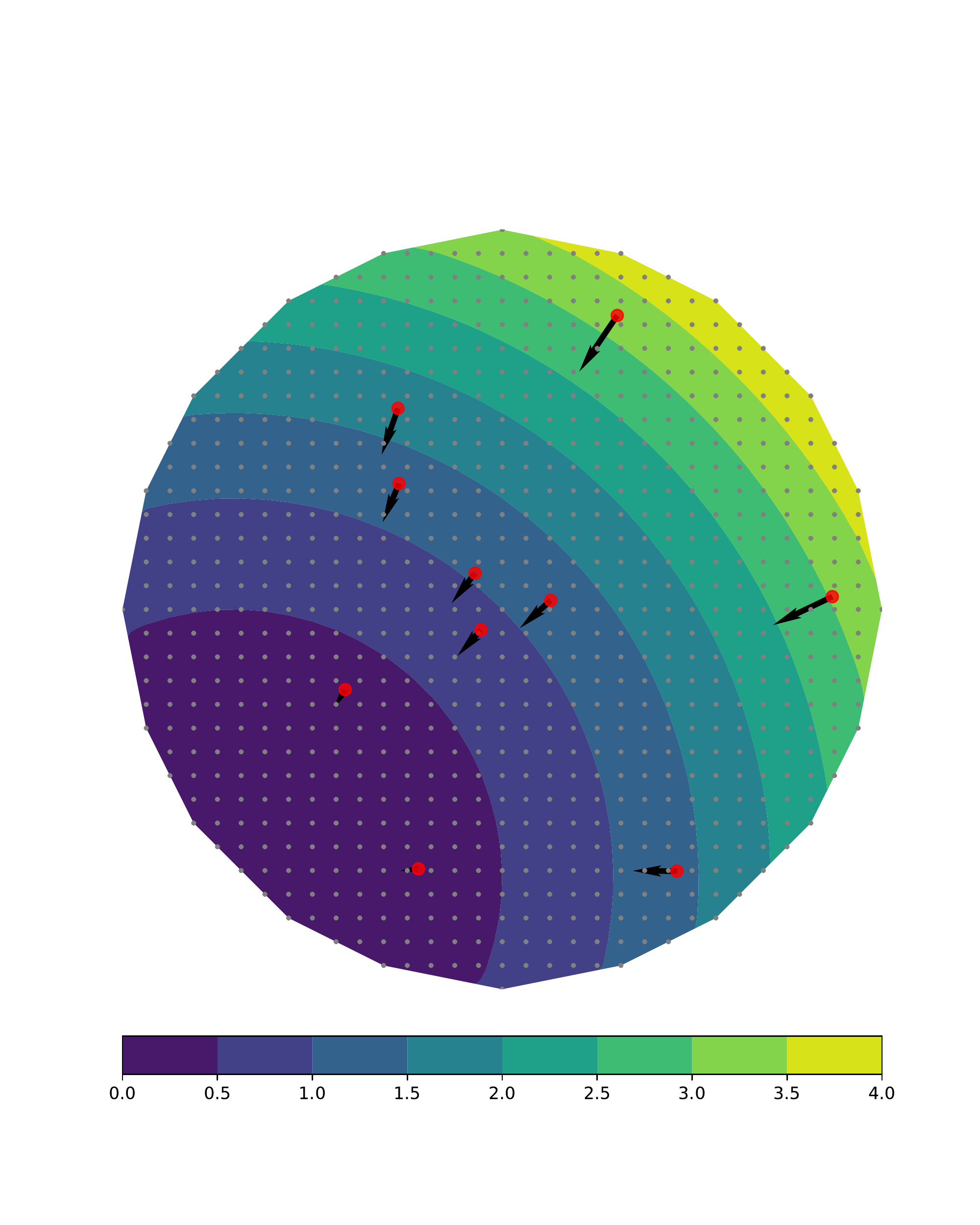}} 
	\subfloat[$t = 3$]{\includegraphics[width=0.24\textwidth]{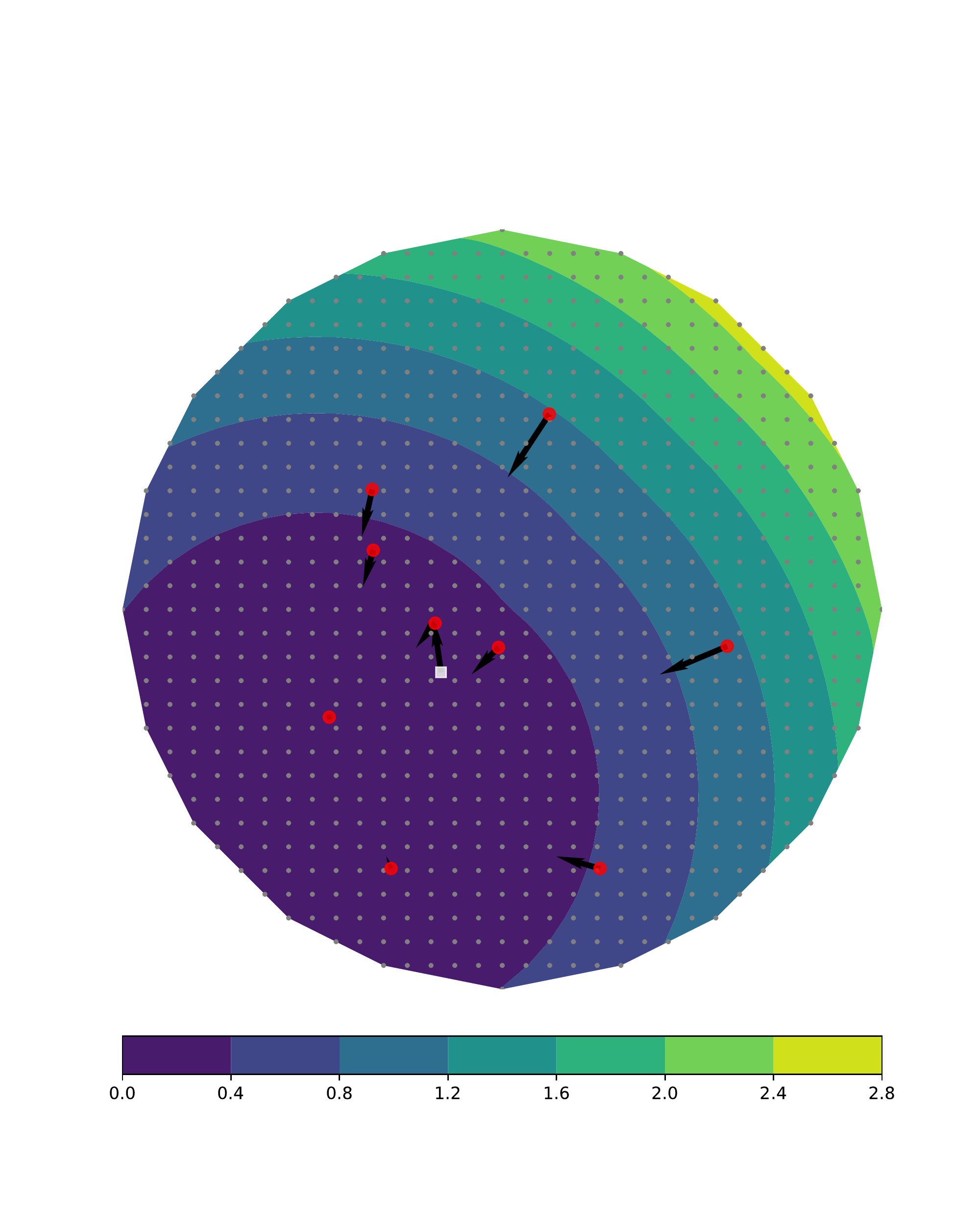}}
	\subfloat[$t = 5$]{\includegraphics[width=0.24\textwidth]{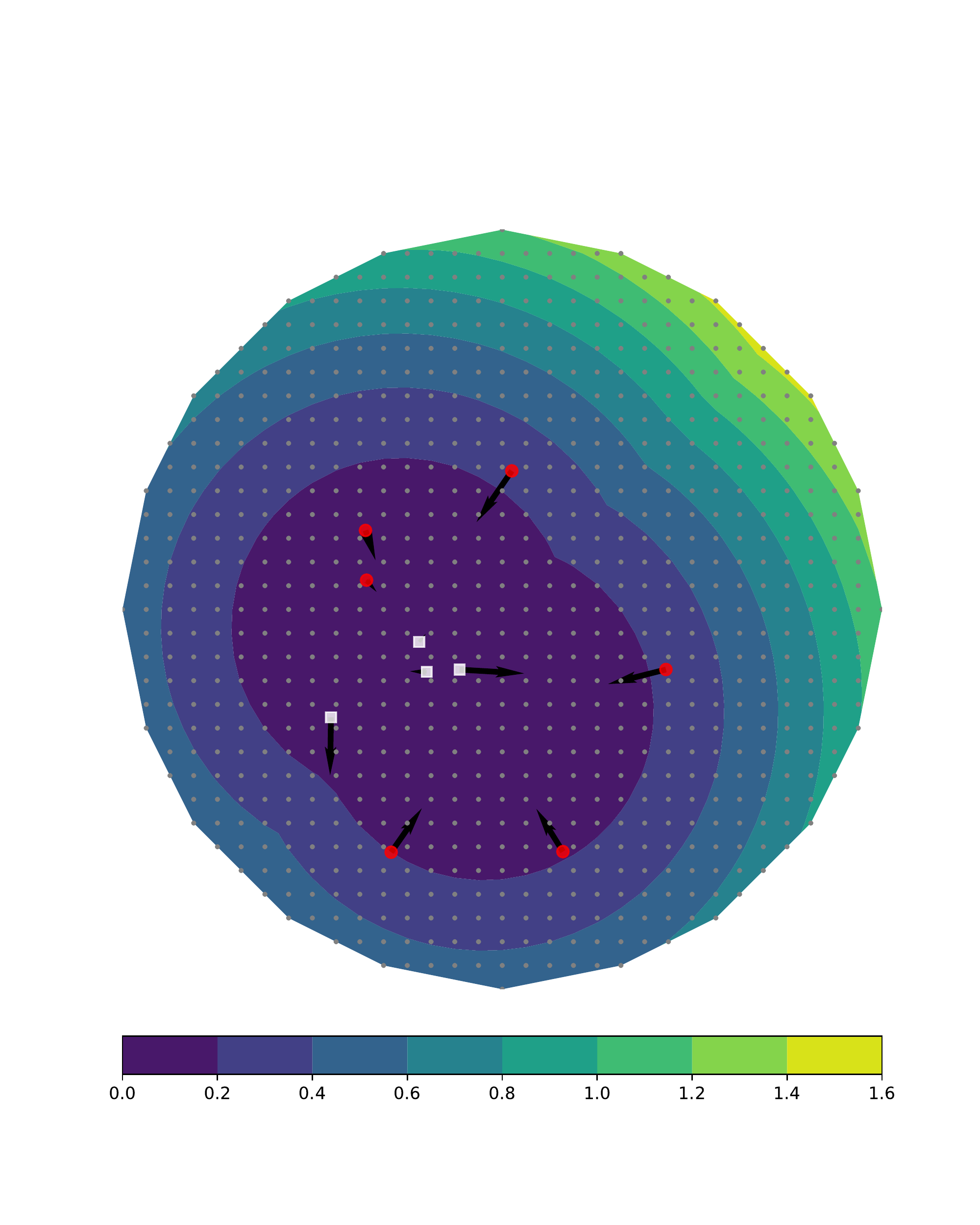}}
	\subfloat[$t = 7$]{\includegraphics[width=0.24\textwidth]{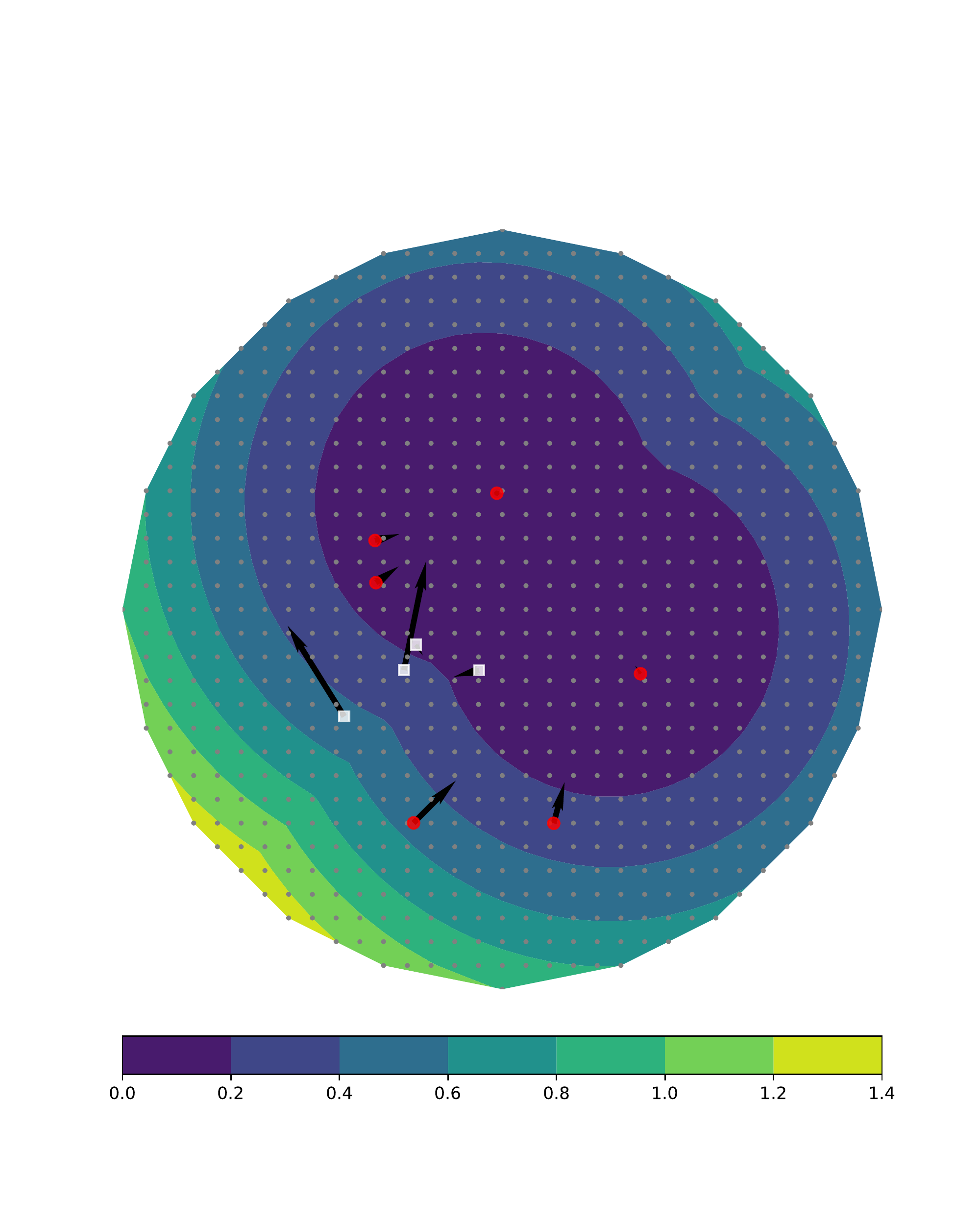}}	\hfill
	\subfloat[$t = 9$]{\includegraphics[width=0.24\textwidth]{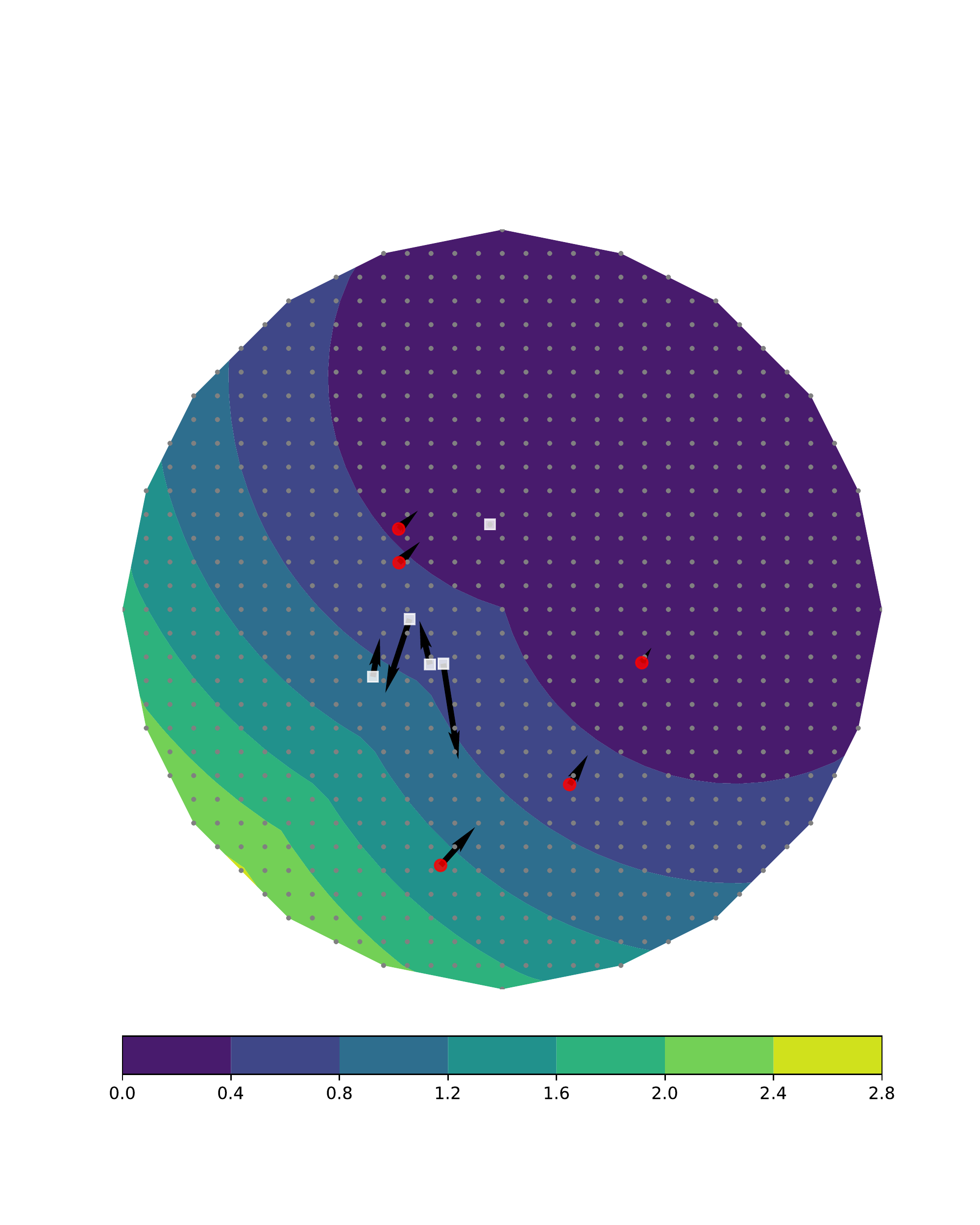}}	
	\subfloat[$t = 12$]{\includegraphics[width=0.24\textwidth]{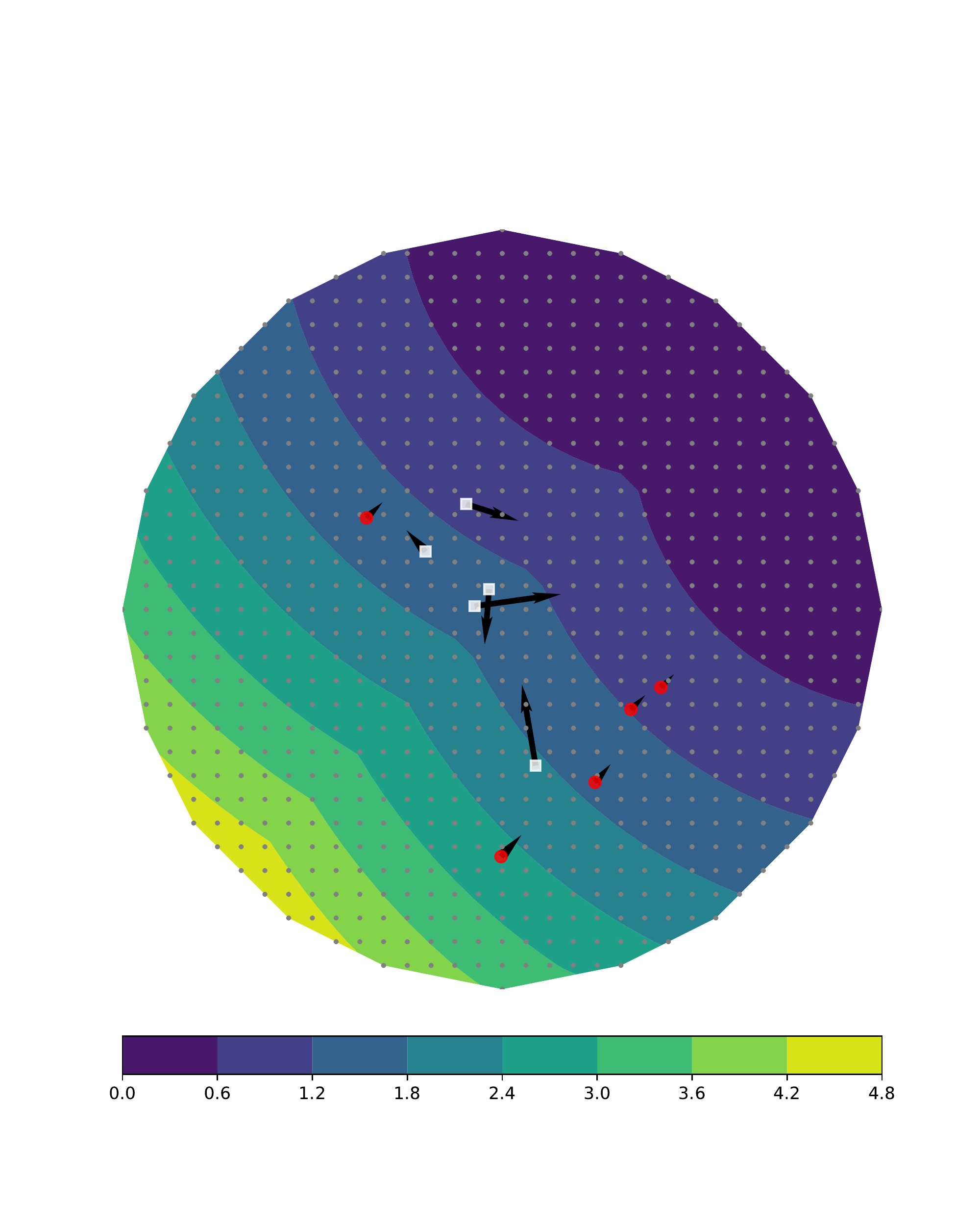}}	
	\subfloat[$t = 18$]{\includegraphics[width=0.24\textwidth]{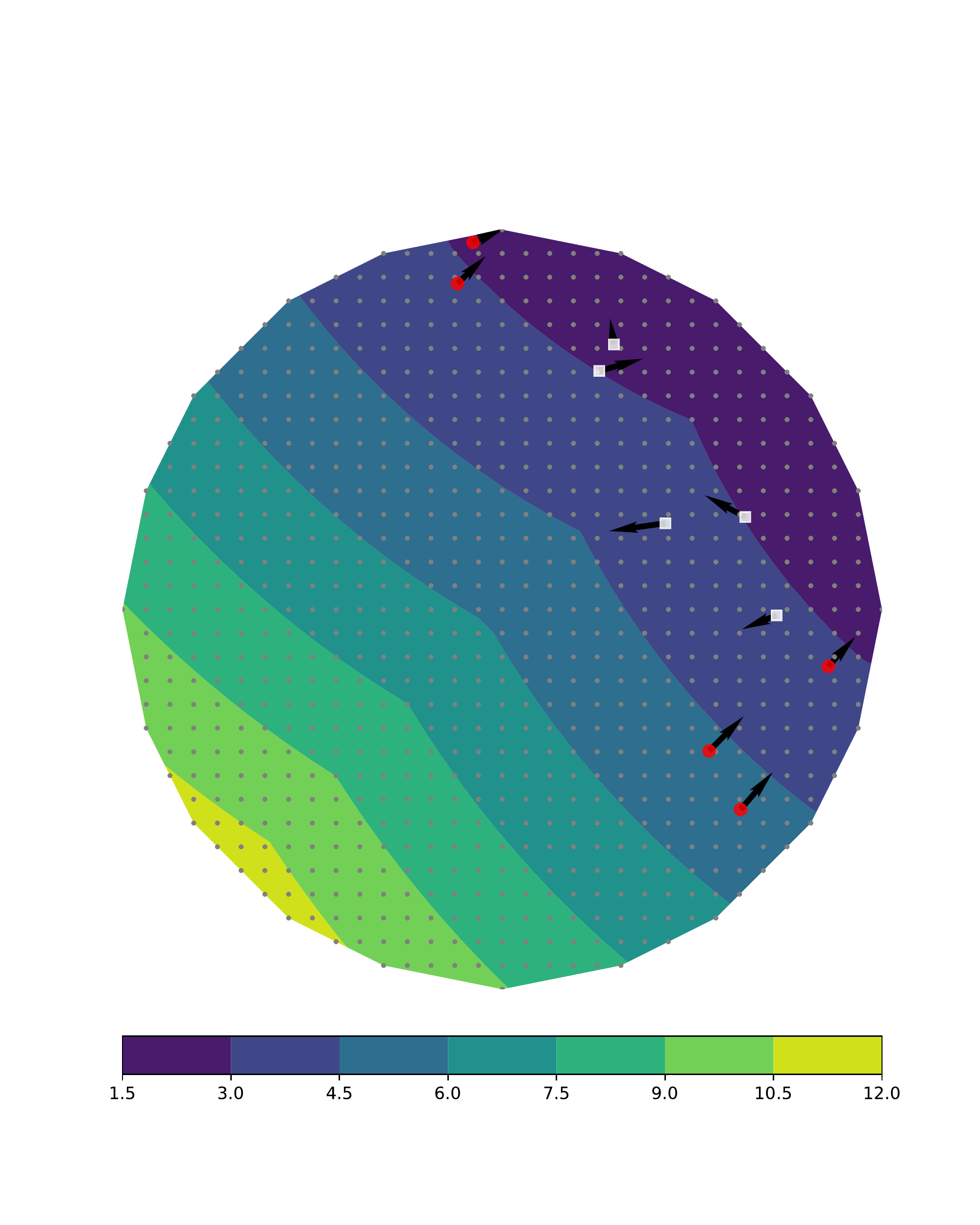}}	
	\subfloat[$t = 19$]{\includegraphics[width=0.24\textwidth]{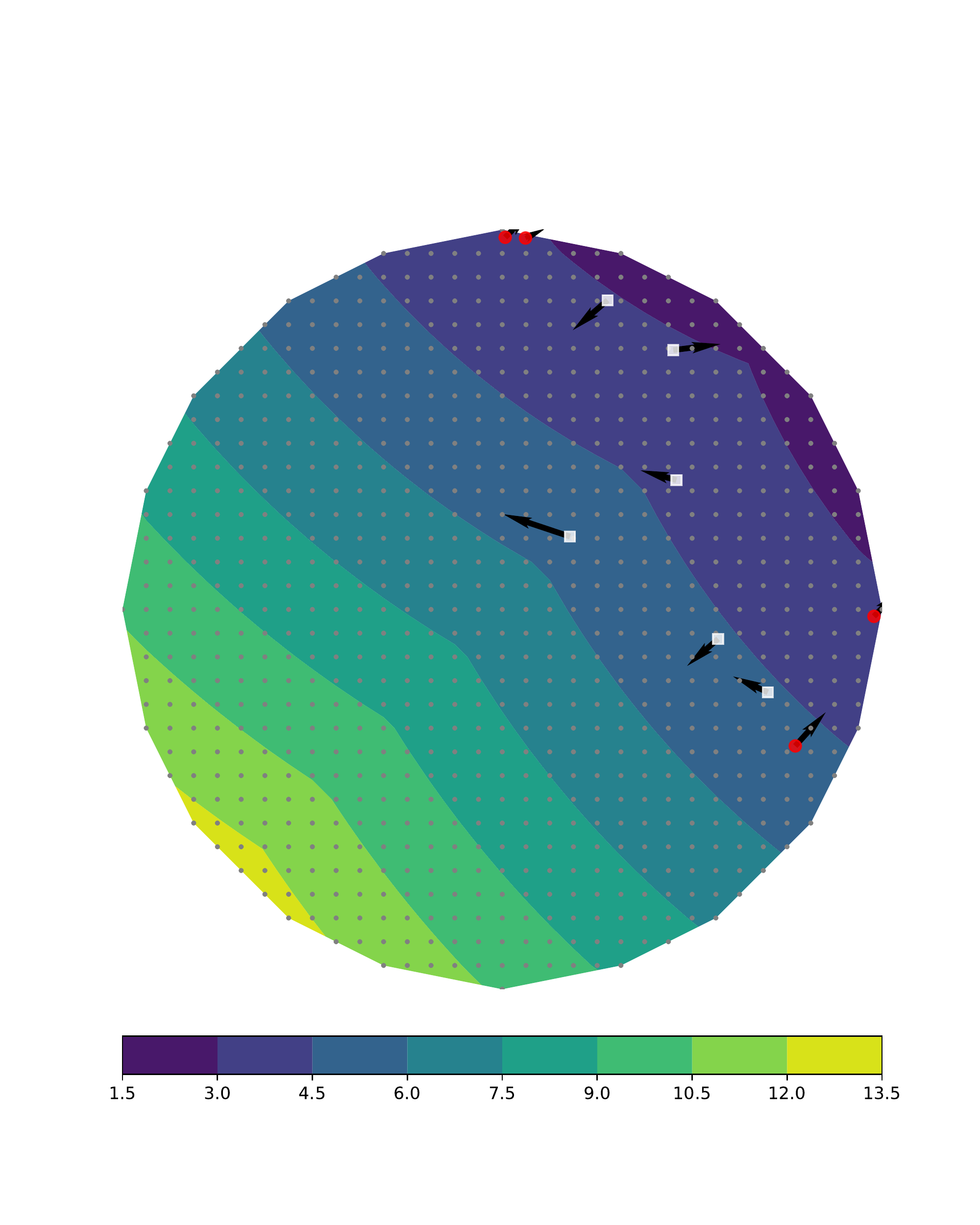}}
	\caption{MSoE algorithm for non-convex case with stepsize $\eta = 0.05$. The red circles denote the feasible decision points ($j \in S^t$) and the white squares denote the infeasible decision points ($j \notin S^t$).}
	\label{fig:nonconvex_msoe}
\end{figure}

\end{document}